\newtheorem{Theorem}{Theorem}
\newtheorem{Lemma}{Lemma}
\newtheorem{Assumption}{Assumption}
\newtheorem{Remark}{Remark}
\newcommand{\rs}{\!\!}
\newcolumntype{C}[1]{>{\centering \arraybackslash}p{#1}}
\newcommand{\bblue}{\textcolor{black}}
\acrodef{ml}[ML]{machine learning}
\acrodef{fl}[FL]{federated learning}
\acrodef{hfl}[HFL]{hierarchical federated learning}
\acrodef{hsfl}[{\tt HSFL}]{hierarchical split federated learning}
\acrodef{phsfl}[{\tt PHSFL}]{personalized hierarchical split federated learning}
\acrodef{rl}[RL]{reinforcement learning}
\acrodef{drl}[DRL]{deep reinforcement learning}
\acrodef{cs}[CS]{central server}
\acrodef{sl}[SL]{split learning}
\acrodef{sfl}[SFL]{split federated learning}
\acrodef{bs}[BS]{base station}
\acrodef{isp}[ISP]{(wireless) internet service provider}
\acrodef{ue}[UE]{user equipment}
\acrodef{es}[ES]{edge server}
\acrodef{csp}[CSP]{content service provider}
\acrodef{fedavg}[FedAvg]{federated averaging}
\acrodef{fednova}[FedNova]{federated normalized averaging}
\acrodef{afa}[AFA]{anarchic federated averaging}
\acrodef{feddisco}[FedDisco]{federated learning with discrepancy-aware collaboration}
\acrodef{scaffold}[SCAFFOLD]{stochastic controlled averaging algorithm}
\acrodef{osafl}[OSAFL]{\underline{\textbf{o}}nline-\underline{\textbf{s}}core-\underline{\textbf{a}}ided \underline{\textbf{f}}ederated \underline{\textbf{l}}earning}
\acrodef{iot}[IoT]{Internet of Things}
\acrodef{os}[OS]{operating system}
\acrodef{iid}[IID]{independent and identically distributed}
\acrodef{sgd}[SGD]{stochastic gradient descent}
\acrodef{cpu}[CPU]{central processing unit}
\acrodef{gpu}[GPU]{graphics processing unit}
\acrodef{prb}[pRB]{physical resource block}
\acrodef{snr}[SNR]{signal-to-noise-ratio}
\acrodef{lp}[LP]{linear programming}
\acrodef{fpp}[FPP]{floating point precision}
\acrodef{cdf}[CDF]{cumulative distribution function}
\acrodef{uav}[UAV]{unmanned aerial vehicles}
\acrodef{ap}[AP]{access point}
\acrodef{hz}[Hz]{hertz}
\acrodef{csi}[CSI]{channel state information}
\acrodef{kkt}[KKT]{Karush–Kuhn–Tucker}
\acrodef{fifo}[{\tt FIFO}]{first-in-first-out}
\acrodef{trimtoplabel}[{\tt TrimTopLabel}]{trim top label}
\acrodef{fcn}[{\tt FCN}]{fully connected neural network}
\acrodef{lstm}[{\tt LSTM}]{long short-term memory}
\acrodef{cnn}[{\tt CNN}]{convolutional neural network}
\acrodef{fc}[{\tt FC}]{fully connected}
\acrodef{isac}[ISAC]{integrated sensing and communication}
\title{Personalized Hierarchical Split Federated Learning in Wireless Networks} 
\author{Md Ferdous Pervej and Andreas F. Molisch \\
\IEEEauthorblockA{Ming Hsieh Department of ECE, University of Southern California, Los Angeles, CA 90089, USA \\
Emails: {\tt \{pervej, molisch\}@usc.edu} }
\thanks{This work was supported by NSF-IITP Project $2152646$.}
\thanks{\copyright $2025$ IEEE. Personal use of this material is permitted. Permission from IEEE must be obtained for all other uses, in any current or future media, including reprinting/republishing this material for advertising or promotional purposes, creating new collective works, for resale or redistribution to servers or lists, or reuse of any copyrighted component of this work in other works.}
\vspace{-0.35in}
}
\begin{document}

\maketitle
\IEEEpeerreviewmaketitle

\begin{abstract}
Extreme resource constraints make large-scale \ac{ml} with distributed clients challenging in wireless networks. 
On the one hand, large-scale \ac{ml} requires massive information exchange between clients and server(s). 
On the other hand, these clients have limited battery and computation powers that are often dedicated to operational computations.
\Ac{sfl} is emerging as a potential solution to mitigate these challenges, by splitting the \ac{ml} model into client-side and server-side model blocks, where only the client-side block is trained on the client device.
However, practical applications require personalized models that are suitable for the client's personal task.
Motivated by this, we propose a \ac{phsfl} algorithm that is specially designed to achieve better personalization performance.
More specially, owing to the fact that regardless of the severity of the statistical data distributions across the clients, many of the features have similar attributes, we only train the body part of the \ac{fl} model while keeping the (randomly initialized) classifier frozen during the training phase.
We first perform extensive theoretical analysis to understand the impact of model splitting and hierarchical model aggregations on the global model.
Once the global model is trained, we fine-tune each client classifier to obtain the personalized models.
Our empirical findings suggest that while the globally trained model with the untrained classifier performs quite similarly to other existing solutions, the fine-tuned models show significantly improved personalized performance.

\end{abstract}

\begin{IEEEkeywords}
Federated learning, personalized federated learning, resource-constrained learning, wireless networks.
\end{IEEEkeywords}

\vspace{-0.15in}

\section{Introduction}
% \bred{($\approx 4.1$ textheight; Abstract+Introduction $==>$ 2.5 pages)}
\acresetall 
\noindent 
Given the massive number of wireless devices that are packed with onboard computation chips, we are one step closer to a connected world. 
While these devices perform many onboard computations, they usually have limited computational and storage resources that can be dedicated to training \ac{ml} models. Conversely, cloud computing for \ac{ml} models raises severe privacy questions. 
Among various distributed learning approaches, \ac{fl} \cite{mcmahan2017communication} is widely popular as it lets the devices keep their data private.
These distributed learning algorithms are not confined to theory anymore; we have seen their practical usage in many real-world applications, such as the Google keyboard (Gboard) \cite{hard2018federated}.

\Ac{fl}, however, has its own challenges \cite{kairouz2021advances}, which are mostly the results of diverse system configurations, commonly known as system heterogeneity, and statistical data distributions of the client devices.
On top of these common issues, varying wireless network conditions also largely affect the \ac{fl} training process when the devices are wireless: the model has to be exchanged using the wireless channel between the devices and server(s).
While such difficulties are often addressed jointly by optimizing the networking and computational resources (e.g., see \cite{pervej2023resource,chen2020joint} and the references therein), traditional \ac{fl} may not be applied directly in many practical resource-constrained applications if the end devices need to train the entire model \cite{lin2024split}.

\Ac{sl} \cite{vepakomma2018split} brings a potential solution to the limited-resource constraints problem by dividing the model into two parts: (a) a much smaller \emph{front-end} part and (b) a bulky \emph{back-end} part.
The front-end part --- also called the \emph{client-side} model --- is trained on the user device, while the bulky back-end part --- also called the \emph{server-side} model --- is trained on the server. 
\Ac{sl} thus can enable training extremely bulky models at the wireless network edge, which typically incurs significant computational and communication overheads in traditional \ac{fl} (e.g., see \cite{pervej2024hierarchical} and the references therein). 
For example, as large foundation models \cite{vaswani2017attention} --- that can have billions of trainable model parameters --- are becoming a part of our day-to-day life and are also envisioned to be an integral part of wireless networks \cite{chen2024big}, \ac{sl} can facilitate training these large models at the wireless edge.

While \ac{sl} can be integrated into the \ac{fl} framework  \cite{xu2024accelerating}, it still has several key challenges, particularly when the clients' data distributions are highly non-IID (independent and identically distributed). 
While \ac{fl} typically seeks a single global model that can be used for all users, the performance reduces drastically under severe non-IID data distributions.
This is due to the fact that general \ac{fl} algorithms, like the widely popular \ac{fedavg} \cite{mcmahan2017communication}, may inherently push the global model toward the local model weights of a client who has more training samples: these samples, however, may not be statistically significant. 
A such-trained global model underperforms in other clients' test datasets, raising severe concerns about this ``\emph{one model fits all}" approach.
To empirically illustrate this, we implemented a simple \ac{hsfl} algorithm with $100$ clients and $4$ edge servers that follows the general architecture of \cite{khan2024joint} and performs $5$ local epochs $3$ edge rounds and $100$ global rounds.
The test performances, as shown in Fig. \ref{hsfl_General_Results}, show that the globally trained model yields very different test accuracies in different clients' test datasets.

\begin{figure}[!t]
    \centering
    \includegraphics[trim={60 30 75 60},clip, width=0.98\linewidth, height=0.24\textheight]{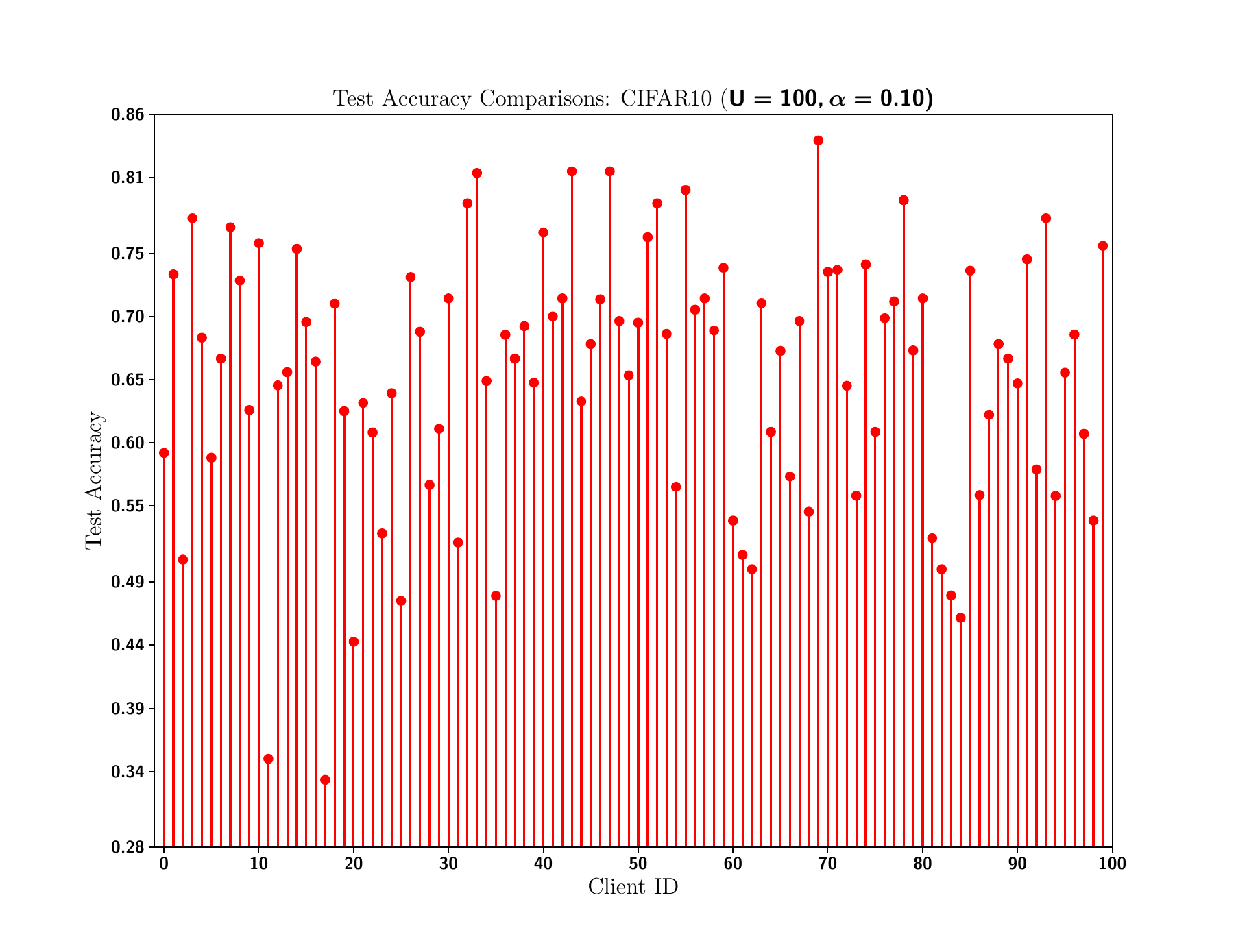}
    \caption{Globally trained model's performance on CIFAR$10$: $65.36\%$, $83.93\%$ and $33.33\%$ mean, maximum and minimum test accuracy, respectively, across $100$ clients, when data samples are distributed following $\mathrm{Dir}(\pmb{\alpha}=\mathbf{0.1})$ \cite{pervej2023resource}}
    \label{hsfl_General_Results}
\end{figure}

\subsection{State of the Art}
\noindent
Many recent works extended the idea of \ac{sl} \cite{vepakomma2018split} into variants of \ac{fl} \cite{mcmahan2017communication} algorithms \cite{xu2024accelerating,liu2022wireless,khan2024joint,liao2024parallelsfl,xia2022hsfl,ao2024federated,lin2024efficient}.
Xu \textit{et al.} proposed a \ac{sfl} algorithm leveraging a single server with distributed clients \cite{xu2024accelerating}.
In particular, the authors assumed that the client-side model is aggregated only after finishing the local rounds, while the server can aggregate the server-side models in each local training round. 
Liu \textit{et al.} proposed hybrid \ac{sfl}, where a part of the clients train their entire models locally, while others collaborate with their serving \ac{bs} to train their respective model following \ac{sl} \cite{liu2022wireless}.
Ao \textit{et al.} proposed \ac{sfl} assuming the \ac{bs} selects a subset of the clients to participate in model training \cite{ao2024federated}. 
In particular, \cite{ao2024federated} jointly optimizes client scheduling, power allocation, and cut layer selection to minimize a weighted utility function that strikes a balance between the training time and energy overheads.

Khan \textit{et al.} proposed an \ac{hsfl} algorithm leveraging \ac{hfl} and \ac{sl} \cite{khan2024joint}.
In particular, the authors aimed at optimizing the latency and energy cost associated with \ac{hsfl}. 
Liao \textit{et al.} proposed a similar \ac{hsfl} algorithm that partitioned clients into different clusters \cite{liao2024parallelsfl}.
The authors leveraged grouping the clients into appropriate clusters to tame the system heterogeneity.
However, statistical data heterogeneity is well known to cause client drifts \cite{karimireddy2020scaffold}.
Lin \textit{et al.} proposed a parallel \ac{sl} algorithm that jointly optimized the cut layer splitting strategy, radio allocation, and power control to minimize per-round training latency \cite{lin2024efficient}.
As opposed to \ac{sfl}, parallel \ac{sl} did not aggregate client-side models.

On the personalized \ac{sfl} side, Han \textit{et al.} proposed weighted aggregation of the local and global models during the local model synchronization phase \cite{han2023splitgp}. 
Similar ideas were also explored in \cite{sun2021partialfed}, where the authors mixed partial global model parameters with the local model parameters during the local model synchronization.
However, \cite{sun2021partialfed} did not explore \ac{sl}.
Chen \textit{et al.} proposed a 3-stage U-shape split learning algorithm \cite{chen2023personalized}.
More specifically, the authors divided the model into front, middle and back parts, where the clients retained the front and back parts, while the server had the bulky middle part. 
Such U-shape architecture incurs additional communication burden compared to \cite{han2023splitgp}.

\subsection{Research Gaps and Our Contributions}
\noindent
The existing studies \cite{xu2024accelerating,liu2022wireless,khan2024joint,liao2024parallelsfl,xia2022hsfl,ao2024federated,lin2024efficient} considered \ac{sfl}, \ac{hsfl} and parallel \ac{sl} extensively without addressing the need for personalization.  
While \cite{han2023splitgp,chen2023personalized} addressed joint personalization and split \ac{fl}, these works were based on a traditional single server with distributed clients case.
Therefore, weighted aggregation in multi-tier/hierarchical networks may lose personalization ability if the learning rate is not significantly low.
Moreover, training the entire model is proven to have poor personalization capability \cite{oh2022fedbabu}: even though the model was split into client-side and server-side parts, all model blocks on both sides of the models were updated in \cite{han2023splitgp,chen2023personalized}.

Motivated by the above facts, we propose a \ac{phsfl} algorithm that integrates \ac{hfl} and \ac{sl}. 
The designed algorithm lets distributed clients train only the body part of the \ac{ml} model to learn feature representations while keeping the output layer (e.g., the classifier) frozen during the training process inspired by the fact that globally trained model works great for \emph{generalization}, while performs poorly for \textit{personalization}. 
Besides, we perform extensive theoretical analysis to find the theoretical bound of the average global gradient norm of the global loss function.
Our simulation results suggest that while the global trained model performs similarly to \ac{hsfl} in generalization, with the similarly fine-tuned models for both cases, our proposed solution achieves significantly better personalization performance.

\section{System Model and Preliminaries}

\subsection{Network Model}
\noindent
We consider a hierarchical wireless network with $\mathcal{B}=\{b\}_{b=0}^{B-1}$ \acp{es} and a \ac{cs}.
Each \ac{es} has $\mathcal{U}_b=\{u\}_{u=0}^{U_b-1}$ (wireless) clients. 
Besides, we have $\mathcal{U}_b \bigcap \mathcal{U}_{b' \neq b} = \varnothing$ and $\mathcal{U} \coloneqq \bigcup_{b=0}^{B-1} \mathcal{U}_b$.
Denote client $u$'s local dataset by $\mathcal{D}_{u,\mathrm{ft}}=\{\mathbf{x}_n\}_{n=0}^{\mathrm{D}_{u}-1}$ that only contains the feature set.
Furthermore, let us denote the corresponding label set by $\mathcal{D}_{u,\mathrm{lb}} = \{\mathbf{y}_n\}_{n=0}^{\mathrm{D}_u-1}$, which belongs to the \ac{es}.
Moreover, we assume that the clients are resource-constrained and do not own the entire dataset or \ac{ml} model. 
We also assume that the clients have perfect communications with the \acp{es}. 
\bblue{Recall that we split the ML model into a front-end part and a back-end part. 
Therefore, we only need to offload the forward propagation output at the last layer of the front-end part in the uplink and the gradients at the cut layer of the back-end part in the downlink for each \ac{sgd} update. 
Besides, during the model weights aggregation phase, only the weights of the front-end model parts need to be offloaded in the uplink.
As such, the wireless payload sizes in both uplink and downlink are significantly small compared to sharing the entire model payload\footnote{
Since practical networks have cyclic redundancy check, error correction coding and hybrid automatic repeat request process in place \cite[Chap. $13$]{molisch2023wireless}, we assume that these payloads can be successfully offloaded (both in uplink and downlink directions) without any errors.}.}

\subsection{Preliminaries: Hierarchical Federated Learning (HFL)}
\noindent
% \bblue{Describe the key notations/components needed for HFL}
Similar to the typical single-server-based FL, we want to train a \ac{ml} model $\mathbf{w}$ collaboratively using the clients $\mathcal{U}$ and \acp{es} $\mathcal{B}$ in \ac{hfl}.
In particular, each global round has $\kappa_1$ edge rounds, and each of these edge rounds has $\kappa_0$ local training rounds, i.e., mini-batch \ac{sgd} steps.
Let us denote the indices of the global, edge and local rounds by $t_2$, $t_1$ and $t_0$, respectively.
We thus keep track of the \ac{sgd} steps by 
\begin{align}
    t \coloneqq t_2 \kappa_1 \kappa_0 + t_1 \kappa_0 + t_0.
\end{align}
Denote the global \ac{ml} model at the \ac{cs}, the edge model at the $b^{\mathrm{th}}$ \ac{bs}/\ac{es}\footnote{The terms \ac{bs} and \ac{es} are used interchangeably throughout the rest of the manuscript.} and the local model of user $u \in \mathcal{U}_b$ \bblue{by $Z$-dimensional parameter vectors} $\mathbf{w}^t$, $\mathbf{w}_{b}^t$, and $\mathbf{w}_{u}^t$, respectively.
It is worth noting that the global and edge models are not updated in every \ac{sgd} step in \ac{hfl}, as described below.

% \bblue{Describe the training process in HFL}
At the start of each \emph{global} round $t_2$, i.e., $(t \mod [t_2 \kappa_1 \kappa_0 + t_1 \kappa_0 + t_0]) = 0$, the global model $\mathbf{w}^t$ is broadcasted to all \acp{es}.
These \acp{es} then start their respective \emph{edge} rounds $t_1 \ni t = t_2\kappa_1\kappa_0 + t_1\kappa_0$ by synchronizing their edge models $\mathbf{w}_b^t \gets \mathbf{w}^t$, followed by broadcasting their edge models to their respective clients $\mathcal{U}_b$.
Before the local training, the clients synchronize their local models at $\bar{t}_0 \coloneqq t_2\kappa_2\kappa_1 + t_1 \kappa_0$, as $\mathbf{w}_u^{\bar{t}_0} \gets \mathbf{w}_b^{\bar{t}_0}$.
The clients want to minimize 
\begin{align}
\label{localLossEqn}
    f_u \big(\mathbf{w}_u^{\bar{t}_0} |\mathcal{D}_u \big) 
    \coloneqq \frac{1}{|\mathcal{D}_{u,\mathrm{ft}}|} \sum\nolimits_{{\mathbf{x}_a,y_a} \in \mathcal{D}_u} l(\mathbf{w}_u^{\bar{t}_0} |(\mathbf{x}_a, y_a)) ,
\end{align}
where $\mathcal{D}_u \coloneqq \{\mathcal{D}_{u,\mathrm{ft}}, \mathcal{D}_{u,\mathrm{lb}} \}$ and $l(\mathbf{w}_u^{\bar{t}_0}|(\mathbf{x}_a, y_a)$ the loss function (e.g., cross entropy, mean square error, etc.) evaluated using \ac{ml} model $\mathbf{w}_u^{\bar{t}_0}$ and training sample $(\mathbf{x}_a, y_a)$.
Each user takes $\kappa_0$ \ac{sgd} steps to minimize (\ref{localLossEqn}) as 
\begin{align}
    \mathbf{w}_{u}^{\bar{t}_0 + \kappa_0} = \mathbf{w}_{u}^{\bar{t}_0} - \eta \sum\nolimits_{t_0=0}^{\kappa_0-1} g_u (\mathbf{w}_{u}^{t_0}) ,
\end{align}
where $\eta$ is the step size and $g_u(\cdot)$ is the stochastic gradient
Once $\kappa_0$ \ac{sgd} steps get completed, each client offloads the trained model $\mathbf{w}_{u,b}^{\bar{t}_0+\kappa_0}$ to their respective associated \ac{es}, and the \ac{es} then aggregates the updated models as 
\begin{align}
    \mathbf{w}_{b}^{\bar{t}_0 + \kappa_0} = \sum\nolimits_{u \in \mathcal{U}_b} \alpha_u 
    \cdot \mathbf{w}_u^{{\bar{t}_0}+\kappa_0},  
\end{align}
where $0\leq \alpha_u \leq 1$ and $\sum_{u\in\mathcal{U}_b} \alpha_u=1$ for all $b \in \mathcal{B}$.
This completes one \emph{edge} round and hence, \ac{es} minimizes the following loss function 
\begin{align}
\label{esLossFunc}
    f_b (\mathbf{w}) = \sum\nolimits_{u \in\mathcal{U}_b} \alpha_u f_u (\mathbf{w}).
\end{align}
Each \ac{es} repeats the above steps for $\kappa_1$ times and then sends the updated edge model to the \ac{cs} at $t=t_2\kappa_1\kappa_0 + \kappa_1\kappa_0$.
The \ac{cs} then updates the global model as
\begin{align}
    \mathbf{w}^{(t_2+1) \kappa_1\kappa_0} = \sum\nolimits_{b=0}^{B-1} \alpha_b 
    \cdot \mathbf{w}_b^{t_2\kappa_1\kappa_0 + \kappa_1\kappa_0},  
\end{align}
where $0\leq \alpha_b \leq 1$ and $\sum_{b=0}^{B-1} \alpha_b = 1$.
This completes one global round, and hence, the \ac{cs} minimizes the following global loss function
\begin{align}
\label{globalLossFunc}
    f (\mathbf{w}) = \sum\nolimits_{b=0}^{B-1} \alpha_b f_b (\mathbf{w}) = \sum\nolimits_{b=0}^{B-1} \alpha_b \sum\nolimits_{u \in\mathcal{U}_b} \alpha_u f_u (\mathbf{w}).
\end{align}

\section{Personalized Split Hierarchical FL}
\noindent
To mitigate the resource constraints and achieve good \emph{personalization} ability, we propose a \ac{phsfl} algorithm that first trains a global model with SL and a frozen output/classifier layer and then lets the clients fine-tune only the classifier on their local training dataset.

\subsection{Proposed Personalized Split HFL: Global Model Training}
\noindent
First, we describe the global model training process with the following key steps.

\subsubsection{\textbf{Step 1 - Global Round Initialization}}
At each global round $t_2$, the global model $\mathbf{w}^{t_2\kappa_1\kappa_0}$ is broadcasted to all \acp{es}.
\subsubsection{\textbf{Step 2 - Edge Round Initialization}}
At the start of the first \emph{edge} round, i.e., $t_1=0$, each \ac{es} \emph{synchronizes} its edge model as 
\begin{align}
    \mathbf{w}_b^{t_2\kappa_1\kappa_0 + t_1\kappa_0} \gets \mathbf{w}^{t_2\kappa_1\kappa_0}.    
\end{align}
\noindent
\textbf{Step 2.1 - Edge Model Splitting}: Each \ac{es} then splits the model at the cut layer as $\mathbf{w}_{b}^{\bar{t}_0} \coloneqq [\mathbf{w}_{b,0}^{\bar{t}_0}; \mathbf{w}_{b,1}^{\bar{t}_0}]$ \cite{vepakomma2018split}, where recall that $\bar{t}_0 \coloneqq t_2\kappa_1\kappa_0 + t_1\kappa_0$, \bblue{$\mathbf{w}_{b,0}^{\bar{t}_0} \in \mathbb{R}^{Z_0}$ and $ \mathbf{w}_{b,1}^{\bar{t}_0} \in \mathbb{R}^{Z-Z_0}$}.
While \ac{sl} helps us to tackle the resource constraints, it does not ensure model \emph{personalization}. 
As such, we first split the server-side model into two parts as $\mathbf{w}_{b,1}^{\bar{t}_0} = [\mathbf{w}_{b,1,\mathrm{bd}}^{\bar{t}_0}; \mathbf{w}_{b,1,\mathrm{hd}}^{\bar{t}_0}]$, where $\mathbf{w}_{b,1,\mathrm{bd}}^{\bar{t}_0}$ is the \emph{body} part that works as the feature extractor, $\mathbf{w}_{b,1,\mathrm{hd}}^{\bar{t}_0}$ is the \emph{head} part, i.e., the output layer/classifier that generates the final output.

\noindent
\textbf{Step 2.2 - Freeze Server-Side Classifier}:
Following \cite{oh2022fedbabu}, we randomly initiate $\mathbf{w}_{b,1,\mathrm{hd}}^{\bar{t}_0}$ and freeze it during the training phase, i.e., $\mathbf{w}_{b,1,\mathrm{hd}}^{\bar{t}_0}$ is never updated during the model training phase.

\noindent
\textbf{Step 2.3 - Edge Model Broadcasting}:
Each \ac{es} then only broadcasts the client-side model parts $\mathbf{w}_{b,0}^{\bar{t_0}}$ to its associated users $\mathcal{U}_b$.

\subsubsection{\textbf{Step 3 - Local Model Training}}
Since clients can only train the client-side model, local training requires information exchange between the client-side and server-side models. 
This is achieved through the following key steps.

\noindent
\textbf{Step 3.1 - Local Model Synchronization}:
Each clients in $\mathcal{U}_b$ synchronizes their respective local model, i.e., the \emph{client-side} model part, using the received broadcasted model as $\mathbf{w}_{u,0}^{\bar{t}_0} \gets \mathbf{w}_{b,0}^{\bar{t}_0}$.
The \ac{es} initializes the \emph{server-side} model parts for each $u \in \mathcal{U}_b$ as $\mathbf{w}_{u,1}^{\bar{t}_0} \gets \mathbf{w}_{b,1}^{\bar{t}_0}$ simultaneously.

\noindent
\textbf{Step 3.2 - Forward Propagation of Client-Side Model}: 
Given input data $\mathbf{x}_n$, the output of the activation function is a mapping of the input data and corresponding weight matrices of the layers in the client-side model. 
More specifically, we consider mini-batch \ac{sgd}, where each client randomly selects $N$, i.e., mini-batch size, samples, and feeds that as the input to the model. 
Denote the indices of the randomly selected training features by $\mathcal{N}$.
As such, denote the output at the cut-layer during the $t_0^{\mathrm{th}}$ local round for a mini-batch of $N$ training samples as $\mathbf{o}_{u, \mathrm{fp}}^{t_0} \coloneqq \tilde{f} \big( \mathbf{w}_{u,0}^{t_0} | \{\mathbf{x}_n\}_{n=0}^{N} \big)$, where $\mathbf{x}_n \sim \mathcal{D}_{u,\mathrm{ft}}$ and $\tilde{f} (\mathbf{w}_{u,0}^{\bar{t}_0} | \{\mathbf{x}_n\}_{n=0}^{N})$ represents the \emph{forward propagation} with respect to model $\mathbf{w}_{u,0}^{\bar{t}_0}$ conditioned on the data samples $\{\mathbf{x}_n\}_{n=0}^N$. 
\bblue{Besides, $\mathbf{o}_{u, \mathrm{fp}}^{t_0} \in \mathbb{R}^{N \times Z_{\mathrm{c}}}$, where $Z_{\mathrm{c}} \ll Z_0$ is the cut-layer size.}

\noindent 
\textbf{Step 3.4 - Transmission of Cut-Layer's Output}: 
Each client offloads $\mathbf{o}_{u, \mathrm{fp}}^{\bar{t}_0}$ and the set of (randomly sampled) indices $\mathcal{N}$ to its associated \ac{es}.
It is worth noting that existing \ac{hsfl} algorithms usually require offloading the corresponding labels to the associated server(s), which may reveal private information. 

\noindent
\textbf{Step 3.5 - Forward Propagation of Server-Side Model}: 
Each \ac{es} takes $\mathbf{o}_{u, \mathrm{fp}}^{\bar{t}_0}$ as input and computes the \emph{forward propagation} of the \emph{server-side} model part that gives the predicted label $\hat{\mathbf{Y}} \coloneqq \tilde{f} \big( \mathbf{w}_{u,1}^{\bar{t}_0} | \mathbf{o}_{u, \mathrm{fp}}^{\bar{t}_0}, \mathcal{N} \big)$.
The \ac{es} extracts the original labels using the received indices $\mathcal{N}$.
Given the original label for the sample $\mathbf{x}_n$ is $\mathbf{y}_n$, the loss associated to this particular sample is denoted by $l \big(\mathbf{y}_n, \hat{\mathbf{Y}} [n]\big) = l \big( \mathbf{y}_n, \tilde{f} \big( \mathbf{w}_{u,1}^{\bar{t}_0} | \mathbf{o}_{u, \mathrm{fp}}^{\bar{t}_0},\mathcal{N} \big)[n] \big) $.
As such, we write the loss function associated with a mini-batch as 
\begin{align}
\label{localLossServerSide}
    &f_{u} \big(\mathbf{w}_{u, 1}^{\bar{t}_0} | \mathbf{o}_{u, \mathrm{fp}}^{\bar{t}_0} \big) 
    \coloneqq [1/N] \sum\nolimits_{n \in \mathcal{N}}  l(\mathbf{y}_n, \hat{\mathbf{Y}}[n]) \nonumber \\
    &\qquad \qquad= [1/N] l \sum\nolimits_{n \in \mathcal{N}} \big( \mathbf{y}_n, \tilde{f} \big( \mathbf{w}_{u,1}^{\bar{t}_0} | \mathbf{o}_{u, \mathrm{fp}}^{\bar{t}_0},\mathcal{N} \big)[n] \big).
\end{align}

\noindent
\textbf{Step 3.6 - Back Propagation of Server-Side Model}:
In order to minimize (\ref{localLossServerSide}), the server first performs \emph{backpropagation} using the server-side model part as 
\begin{align}
\label{bpropServerSide}
    & \mathbf{w}_{u,1}^{\bar{t}_0+1} = 
    \mathbf{w}_{u,1}^{\bar{t}_0} - \eta \big[g_{u} \big(\mathbf{w}_{u,1}^{\bar{t}_0} | \mathbf{o}_{u, \mathrm{fp}}^{\bar{t}_0} \big)\big]_{\mathbf{w}_{u, 1}^{\bar{t}_0}},
\end{align}
where $\eta$ is the learning rate and $\big[g_{u} \big(\mathbf{w}_{u,1}^{\bar{t}_0} | \mathbf{o}_{u, \mathrm{fp}}^{\bar{t}_0} \big)\big]_{\mathbf{w}_{u, 1}^{\bar{t}_0}}$, represents the stochastic gradient with respect to the server-side model parameters $\mathbf{w}_{u,1}^{\bar{t}_0}$. 
Since $\mathbf{w}_{u,1,\mathrm{hd}}^{\bar{t}_0}$ is frozen and $\mathbf{w}_{u,1}^{\bar{t}_0} = [\mathbf{w}_{u,1,\mathrm{bd}}^{\bar{t}_0}; \mathbf{w}_{u,1,\mathrm{hd}}^{\bar{t}_0}]$, (\ref{bpropServerSide}) implies the following   
\begin{align}
    & \mathbf{w}_{u,1,\mathrm{bd}}^{\bar{t}_0+1} = 
    \mathbf{w}_{u,1,\mathrm{bd}}^{\bar{t}_0} - \eta \big[g_{u} \big(\mathbf{w}_{u,1,\mathrm{bd}}^{\bar{t}_0} | \mathbf{o}_{u, \mathrm{fp}}^{\bar{t}_0} \big)\big]_{\mathbf{w}_{u,1,\mathrm{bd}}^{\bar{t}_0}}.\\
    &\mathbf{w}_{u,1,\mathrm{hd}}^{\bar{t}_0+1} = 
    \mathbf{w}_{u,1,\mathrm{hd}}^{\bar{t}_0} - 0 \times \big[g_{u} \big(\mathbf{w}_{u,1,\mathrm{hd}}^{\bar{t}_0} | \mathbf{o}_{u, \mathrm{fp}}^{\bar{t}_0} \big)\big]_{\mathbf{w}_{u,1,\mathrm{hd}}^{\bar{t}_0}}, \label{headUpdate}
\end{align}
where a learning rate of $0$ means the $\mathbf{w}_{u,1\mathrm{hd}}^{\bar{t}_0}$ is not getting updated.
Besides, we do not need to set $\eta=0$ explicitly for the classifier during the model training phase since we can disable the gradient computation for the classifier in almost all popular \ac{ml} libraries like PyTorch\footnote{\url{https://pytorch.org/}} and TensorFlow\footnote{\url{https://www.tensorflow.org/}}.
Furthermore, the gradients are calculated using the chain rule. 
Denote the gradient of the \emph{server-side} input layer, i.e., the layer that receives the cut-layer output $\mathbf{o}_{u, \mathrm{fp}}^{\bar{t}_0} \bblue{\in \mathbb{R}^{N\times Z_{\mathrm{c}}}}$ as \emph{client-side} model's input by $ \mathbf{o}_{u, \mathrm{bp}}^{\bar{t}_0}$.

\noindent
\textbf{Step 3.7 - Transmission of Server-Side Cut-Layer's Gradient}: 
The \ac{es} then transmits the gradient $ \mathbf{o}_{u, \mathrm{bp}}^{\bar{t}_0}$ to the client to compute the gradients of the client-side model $\mathbf{w}_{u,0}^{\bar{t}_0}$ using the chain rule.

\noindent
\textbf{Step 3.8 - Back Propagation of Client-Side Model}: 
Each client then performs \emph{backpropagation} to compute the gradients that minimizes the loss function as 
\begin{align}
\label{bpropClientSide}
    &\mathbf{w}_{u,0}^{\bar{t}_0+1} = \mathbf{w}_{u,0}^{\bar{t}_0} - \eta \big[ g_{u} \big( \mathbf{w}_{u,1}^{\bar{t}_0} | \mathbf{o}_{u, \mathrm{fp}}^{\bar{t}_0} \big) \big]_{\mathbf{w}_{u,0}^{\bar{t}_0} | \mathbf{o}_{u, \mathrm{bp}}^{\bar{t}_0} },
\end{align}
where the notation $\big[ g_{u} \big( \mathbf{w}_{u,1}^{\bar{t}_0} | \mathbf{o}_{u, \mathrm{fp}}^{\bar{t}_0} \big) \big]_{\mathbf{w}_{u,0}^{\bar{t}_0} | \mathbf{o}_{u, \mathrm{bp}}^{\bar{t}_0} }$ represents the stochastic gradient of the loss function with respect to $\mathbf{w}_{u,0}^{\bar{t}_0}$ that depends on the gradient at cut-layer $\mathbf{o}_{u, \mathrm{bp}}^{\bar{t}_0}$, which is available to the client. 
Therefore, the client can complete the gradient with respect to its \emph{client-side} model parameters using chain rule.

\textbf{Steps 3.2 - 3.8} are repeated for $\mathrm{N}$ mini-batches, which completes one local training epoch. 
Each client performs $\kappa_0$ local epochs, which complete the local training.

\subsubsection{\textbf{Step 4 - Trained Client-Side Model Offloading}}
All clients offload their respective trained \emph{client-side} model $\mathbf{w}_{u,0}^{\bar{t}_0 + \kappa_0}$ to their associated \ac{es}. 

\subsubsection{\textbf{Step 5 - Edge Aggregation}} 
Each \ac{es} then aggregates the client-side and server-side models as 
\begin{align}
    &\mathbf{w}_{b,0}^{\bar{t}_0 + \kappa_0} = \mathbf{w}_{b,0}^{t_2\kappa_1 \kappa_0 + (t_1+1)\kappa_0}  
    = \sum\nolimits_{u=0}^{U_b-1} \alpha_u \mathbf{w}_{u,0}^{\bar{t}_0 + \kappa_0}.\\
    &\mathbf{w}_{b,1}^{\bar{t}_0 + \kappa_0} = \mathbf{w}_{b,1}^{t_2\kappa_1\kappa_0 + (t_1+1)\kappa_0} 
    = \sum\nolimits_{u=0}^{U_b-1} \alpha_u \mathbf{w}_{u,1}^{\bar{t}_0+\kappa_0}.
\end{align}

Each \ac{es} repeats \textbf{Step 2.2} to \textbf{Step 5} $\kappa_1$ times and then sends the updated edge model $\mathbf{w}_{b}^{t_2\kappa_1\kappa_2 + \kappa_1\kappa_2} = [\mathbf{w}_{b,0}^{t_2\kappa_1\kappa_2 + \kappa_1\kappa_2}; \mathbf{w}_{b,1}^{t_2\kappa_1\kappa_2 + \kappa_1\kappa_2}]$ to the \ac{cs}.

\subsubsection{\textbf{Step 6 - Global Aggregation}} 
Upon receiving the updated edge models, the \ac{cs} aggregates these updated models and updates the global model as
\begin{align}
    \mathbf{w}^{(t_2+1)\kappa_1\kappa_0} = \sum\nolimits_{b=0}^{B-1} \alpha_b \mathbf{w}_{b}^{t_2\kappa_1\kappa_0 + \kappa_1\kappa_0}.
\end{align}
This concludes one global round.
The above steps are repeated for $t_2=0,1,\dots,R-1$ rounds.

\begin{Remark}[Communication overheads]
The communication overhead to offload $\mathbf{o}_{u, \mathrm{fp}}^{t_0}$ is $ \left( N \times Z_{\mathrm{c}} \right) \times (\omega + 1)$, where $\omega$ is the floating point precision \cite{pervej2023resource}.
Besides, the overhead to offload $N$ indices is upper bounded by $N \times \left(\left\lceil \log_2\left(|\mathcal{D}_{u,\mathrm{ft}}|\right) \right\rceil + 1\right)$, where $\left\lceil \cdot \right\rceil$ is the \emph{ceiling} function. 
Therefore, for each local round, i.e., $\mathrm{N}$ mini-batch \ac{sgd} steps, the cumulative communication overhead is upper bounded by $\Phi_{\mathrm{local}} \coloneqq \mathrm{N} \times \left\{2 \left[\left( N \times Z_{\mathrm{c}} \right) (\omega + 1)\right] + N \left(\left\lceil \log_2\left(|\mathcal{D}_{u,\mathrm{ft}}|\right) \right\rceil + 1\right) \right\}$.
Furthermore, we have a communication overhead of $\Phi_{\mathrm{off}} \coloneqq Z_0 \times (\omega+1)$ bits, where $Z_0$ is the total number of parameters in $\mathbf{w}_{b,0}^t$, during the trained \emph{client-side} model offloading phase.
As such, for a single edge aggregation round, our proposed \ac{phsfl} has the following cumulative communication overhead
\begin{align}
    \Phi_{\mathrm{PHSFL}} \leq  \kappa_0 \cdot \Phi_{\mathrm{local}} + 2\Phi_{\mathrm{off}}, 
\end{align}
which is due to the fact that each client has to perform $\kappa_0$ local rounds and the \emph{client-side} model $\mathbf{w}_{b,0}^t$ has to be broadcasted/offloaded in the downlink/uplink. 
Note that the communication overhead with typical \ac{hfl} is $\Phi_{\mathrm{HFL}} \coloneqq 2 \times Z (\omega+1)$. 
Therefore, \ac{phsfl} is communication efficient only when $\Phi_{\mathrm{HFL}} > \Phi_{\mathrm{PHSFL}}$, which is typically the case since $Z \gg (Z_0 + Z_\mathrm{c})$.
\end{Remark}

\begin{Remark}[Choice of the cut layer]
As practical networks are resource constraints, the choice of the cut layer should depend on the clients' and \ac{es}'s available resources. 
While it is also possible to optimize the cut layer\footnote{\bblue{However, it can become vital to optimize the cut layer under extreme delay and energy constraints. 
In such cases, one may jointly consider the learning and wireless networking constraints to optimize the cut layer and other parameters, which is beyond the scope of this paper.}}, the choice of this cut layer does not affect the training performance as long as the clients can train the client-side model block and offload it back to the \ac{es} during the model aggregation phase without any errors.
This is due to the fact that we calculate the loss based on the input $\mathbf{x}_n \sim \mathcal{D}_{u,\mathrm{ft}}$, which we minimize by calculating the gradients. 
\end{Remark}

\subsection{Personalized Split HFL: Fine-Tuning of Global Trained Model}
\label{personalizationSteps}
\noindent
Once the globally trained model $\mathbf{w}^{*}$ is obtained, the \ac{cs} broadcasts it to all \acp{es}.
We assume the clients will \emph{fine-tune} the trained model for $\{k\}_{k=0}^{K-1}$ \ac{sgd} steps.
The \ac{es} then splits the model as $\mathbf{w}_{b}^{k} \coloneqq [\mathbf{w}_{b,0}^{k}; \mathbf{w}_{b,1}^{k}]$.
It then broadcasts the trained \emph{client-side} model to all clients and initialize the \emph{server-side} model as $\mathbf{w}_{u,1}^{k} = [\mathbf{w}_{u,1,\mathrm{bd}}^{k}; \mathbf{w}_{u,1,\mathrm{hd}}^{k}]$.
The client computes its forward propagation output $\mathbf{o}_{u, \mathrm{fp}}^{k}$ and shares that with the \ac{es}. 
The \ac{es} then takes a \ac{sgd} step to fine-tune the head/classifier as 
\begin{align}
    \mathbf{w}_{u,1,\mathrm{hd}}^{k+1} = \mathbf{w}_{u,1,\mathrm{hd}}^{k} - \tilde{\eta} \big[g_{u} \big(\mathbf{w}_{u,1,\mathrm{hd}}^{k} | \mathbf{o}_{u, \mathrm{fp}}^{k} \big)\big]_{\mathbf{w}_{u,1,\mathrm{hd}}^{k}},
\end{align}
where $\tilde{\eta}$ is the learning rate that can differ from $\eta$.
The body part and the client-side model weights remain \emph{as-it-is} in $\mathbf{w}^{*}$.
The above steps are repeated for $K$ times, which gives the personalized classifier as $\mathbf{w}_{u,1,\mathrm{hd}}^{K}$.
Hence, the entire personalized model of the $u^{\mathrm{th}}$ user can be expressed as $\mathbf{w}_u^{K} = [\mathbf{w}_{b,0}^{*}; [\mathbf{w}_{b,1,\mathrm{bd}}^{*}; \mathbf{w}_{b,1,\mathrm{hd}}^{K}]]$.

\section{Theoretical Analysis}
\noindent
For notational simplicity, we use the notation $f_u(\mathbf{w}_u^t)$ to represent client's loss function $f_{u} \big(\mathbf{w}_{u, 1}^{t} | \mathbf{o}_{u, \mathrm{fp}}^{t} \big)$ and $g_u (\mathbf{w}_u^t)$ to represent $\left[\big[ g_{u} \big( \mathbf{w}_{u,1}^{t} | \mathbf{o}_{u, \mathrm{fp}}^{t} \big) \big]_{\mathbf{w}_{u,0}^{t} | \mathbf{o}_{u, \mathrm{bp}}^{t}}; \big[g_{u} \big(\mathbf{w}_{u,1}^{t} | \mathbf{o}_{u, \mathrm{fp}}^{t} \big)\big]_{\mathbf{w}_{u, 1}^{t}} \right]$ throughout the rest of the paper.

\subsection{Assumptions}
\noindent
We make the following standard assumptions \cite{wang2020tackling, pervej2024hierarchical,wang2022demystifying, pervej2025resource, ye2023feddisco} 
\begin{Assumption}[Smoothness] 
    The loss functions are $\beta$-Lipschitz smooth, i.e., for some $\beta>0$, $\Vert \nabla f_u(\mathbf{w}) - \nabla f_u(\mathbf{w}') \Vert \leq \beta \Vert \mathbf{w} - \mathbf{w}' \Vert$, where $\Vert \cdot \Vert$ is the $L_2$ norm, for all loss functions.
\end{Assumption}

\begin{Assumption}[Unbiased gradient with bounded variance]
    The mini-batch stochastic gradient calculated using client's randomly sampled mini-batch $\zeta$ is unbiased, i.e., $\mathbb{E}_{\zeta \sim \mathcal{D}_u} \left[g_u(\mathbf{w}) \right] = \nabla f_u (\mathbf{w})$, where $\mathbb{E} [\cdot]$ is the expectation operator.
    Besides, the variance of the gradients is bounded, i.e., $ \mathbb{E}_{\zeta \sim \mathcal{D}_u} \left[ \Vert \nabla f_u (\mathbf{w}) - g_u(\mathbf{w}) \Vert^2 \right] \leq \sigma^2 $, for some $\sigma \geq 0$ and all $u \in \mathcal{U}$.
\end{Assumption}

\begin{Assumption}[Bounded gradient divergence]
    The divergence between (a) local and \ac{es} loss functions and (b) \ac{es} and global loss functions are bounded as
    \begin{align}
        \sum\nolimits_{u \in \mathcal{U}_b} \alpha_u \Vert \nabla f_u (\mathbf{w}) - \nabla f_b (\mathbf{w}) \Vert^2 \leq \epsilon_0^2, \\
        \sum\nolimits_{b=1}^B \alpha_b \Vert \nabla f_b (\mathbf{w}) - \nabla f (\mathbf{w}) \Vert^2 \leq \epsilon_1^2, 
    \end{align}
    for some $\epsilon_0 \leq 0$, $\epsilon_1 \leq 0$ and all $u$ and $b$.
\end{Assumption}

Additionally, since the global and edge models do not exist in every \ac{sgd} step $t$, following standard practice \cite{pervej2024hierarchical,wang2022demystifying}, we assume that virtual copies of these models exist, denoted by $\bar{\mathbf{w}}^t$ and $\bar{\mathbf{w}}_b^t$, respectively.
Moreover, the above assumptions also apply to these virtual models.

\subsection{Convergence Analysis}
\begin{Theorem}
Suppose the above assumptions hold. Then, if the learning rate satisfies $\eta < \frac{1}{2\sqrt{5} \beta \kappa_1 \kappa_0}$, the average global gradient norm is upper bounded by
\begin{align}
\label{covBound_Eqn}
    &\rs\rs \frac{1}{T} \sum_{t=0}^{T-1} \mathbb{E} \left[\left\Vert \nabla f (\bar{\mathbf{w}}^t) \right\Vert^2 \right] 
    \leq \frac{2 \left(\mathbb{E} \left[f \left( \bar{\mathbf{w}}^0 \right) \right] - \mathbb{E} \left[f \left( \bar{\mathbf{w}}^{T} \right) \right]\right)}{\eta T} + \nonumber\\
    &\rs\rs \beta \eta \sigma^2 \sum\nolimits_{b=0}^{B-1} \alpha_b^2 \sum\nolimits_{u \in \mathcal{U}_b} \alpha_u^2 + \left(\Gamma_0 + \Gamma_1\right)\sigma^2 + \tilde{\Gamma}_0 \epsilon_0^2 +  \tilde{\Gamma_1}\epsilon_1^2, \rs
\end{align}
where $\Gamma_0 \coloneqq  4\beta^2 \eta^2 \kappa_0^2 - 4 \beta^2 \eta^2 \kappa_0^2  \sum_{b=0}^{B-1} \alpha_b \sum_{u \in \mathcal{U}_b} \alpha_{u}^2$, 
$\Gamma_1 \coloneqq 80 \kappa_1^2 \beta^4 \eta^4 \kappa_0^4 + 4 \kappa_1 \kappa_0 \beta^2 \eta^2 \sum_{b=0}^{B-1} \alpha_b \sum_{u \in \mathcal{U}_b} \alpha_{u}^2 - 4 \kappa_1 \kappa_0 \beta^2 \eta^2 \sum_{b=0}^{B-1} \alpha_{b}^2 \sum_{u\in \mathcal{U}_{b}} \alpha_u^2 - 80 \kappa_1^2 \beta^4 \eta^4 \kappa_0^4 \sum_{b=0}^{B-1} \alpha_b \sum_{u \in \mathcal{U}_b} \alpha_{u}^2$, $\tilde{\Gamma}_0 \coloneqq 12 \beta^2 \eta^2 \kappa_0^2 \big(1 + 20 \kappa_0^2\kappa_1^2 \beta^2 \eta^2 \big)$ and $\tilde{\Gamma}_1 \coloneqq 20 \beta^2 \eta^2 \kappa_1^2 \kappa_0^2$.
\end{Theorem}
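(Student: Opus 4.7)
The plan is to follow the standard two-level hierarchical convergence recipe, adapted to the fact that in \ac{phsfl} only the body parameters are updated (so effectively the ``head'' block contributes zero gradient and can be absorbed into the notation $g_u$ defined at the top of the section). I would work entirely with the virtual iterates $\bar{\mathbf{w}}^t$ and $\bar{\mathbf{w}}_b^t$, since by construction $\bar{\mathbf{w}}^{t+1}=\bar{\mathbf{w}}^t-\eta\sum_{b}\alpha_b\sum_{u\in\mathcal{U}_b}\alpha_u g_u(\mathbf{w}_u^t)$ holds at \emph{every} \ac{sgd} step, not only at aggregation boundaries. The starting point is $\beta$-smoothness applied at $\bar{\mathbf{w}}^t$, which yields the one-step descent inequality
\begin{align*}
\mathbb{E}[f(\bar{\mathbf{w}}^{t+1})]\le \mathbb{E}[f(\bar{\mathbf{w}}^t)]-\eta\,\mathbb{E}\langle\nabla f(\bar{\mathbf{w}}^t),\bar{\mathbf{g}}^t\rangle+\tfrac{\beta\eta^2}{2}\mathbb{E}\|\bar{\mathbf{g}}^t\|^2,
\end{align*}
where $\bar{\mathbf{g}}^t\coloneqq\sum_b\alpha_b\sum_{u\in\mathcal{U}_b}\alpha_u g_u(\mathbf{w}_u^t)$. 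Assumption~2 (unbiasedness) lets me split $\bar{\mathbf{g}}^t$ into its mean $\sum_b\alpha_b\sum_{u}\alpha_u\nabla f_u(\mathbf{w}_u^t)$ plus zero-mean noise, and the standard $\langle a,b\rangle=\tfrac12(\|a\|^2+\|b\|^2-\|a-b\|^2)$ identity converts the cross-term into an $\|\nabla f(\bar{\mathbf{w}}^t)\|^2$ piece (the quantity I want on the left) plus a ``drift'' piece $\|\nabla f(\bar{\mathbf{w}}^t)-\sum_b\alpha_b\sum_u\alpha_u\nabla f_u(\mathbf{w}_u^t)\|^2$.

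Next I would handle the quadratic term $\mathbb{E}\|\bar{\mathbf{g}}^t\|^2$ by using independence of the client noises across $u,b$: this produces the variance term $\beta\eta\sigma^2\sum_b\alpha_b^2\sum_u\alpha_u^2$ that appears in~(\ref{covBound_Eqn}), together with a second ``mean-gradient'' piece $\|\sum_b\alpha_b\sum_u\alpha_u\nabla f_u(\mathbf{w}_u^t)\|^2$ which, via $\beta$-smoothness and Jensen, can be related back to $\|\nabla f(\bar{\mathbf{w}}^t)\|^2$ plus drift. The drift pieces all reduce to bounding $\sum_b\alpha_b\sum_u\alpha_u\,\mathbb{E}\|\mathbf{w}_u^t-\bar{\mathbf{w}}^t\|^2$, and this is where the genuinely hierarchical structure enters.

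The main obstacle, and the step I would spend the most care on, is the two-level client-drift bound. I would insert $\bar{\mathbf{w}}_b^t$ and write $\|\mathbf{w}_u^t-\bar{\mathbf{w}}^t\|^2\le 2\|\mathbf{w}_u^t-\bar{\mathbf{w}}_b^t\|^2+2\|\bar{\mathbf{w}}_b^t-\bar{\mathbf{w}}^t\|^2$. The first term I would bound by unrolling at most $\kappa_0$ local \ac{sgd} steps starting from the last edge synchronization, using Assumption~2 for the noise (giving $\sigma^2$) and Assumption~3 with bound $\epsilon_0^2$ for the local-vs-edge divergence; a telescoping/recursive argument then yields a bound of the form $\kappa_0^2\eta^2(\epsilon_0^2+\sigma^2)+\kappa_0^2\eta^2\|\nabla f_b(\bar{\mathbf{w}}_b^t)\|^2$. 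The second term requires the same unrolling at the edge level over $\kappa_1$ edge rounds; each edge round already contains a client-drift term, so the recursion is nested and I expect the powers $\kappa_1^2\kappa_0^4$ and $\kappa_1^2\kappa_0^2$ that appear in $\Gamma_1$ and $\tilde{\Gamma}_1$ to arise precisely from this nesting. The condition $\eta<\frac{1}{2\sqrt{5}\beta\kappa_1\kappa_0}$ is exactly what is needed to absorb the self-referential $\|\nabla f(\bar{\mathbf{w}}^t)\|^2$ term produced on the right-hand side back into the coefficient on the left (the factor $\sqrt{5}$ comes from Young's inequality applied to the five additive pieces the drift bound splits into).

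Finally I would telescope the resulting one-step inequality from $t=0$ to $T-1$, divide by $\eta T$, and rearrange to isolate $\tfrac{1}{T}\sum_t\mathbb{E}\|\nabla f(\bar{\mathbf{w}}^t)\|^2$ on the left. The boundary term $\mathbb{E}[f(\bar{\mathbf{w}}^0)]-\mathbb{E}[f(\bar{\mathbf{w}}^T)]$ collects the descent, the variance term gives the $\beta\eta\sigma^2\sum_b\alpha_b^2\sum_u\alpha_u^2$ contribution, and the drift coefficients collapse into the stated $\Gamma_0,\Gamma_1,\tilde{\Gamma}_0,\tilde{\Gamma}_1$. The delicate bookkeeping is to make sure that the $\sum_b\alpha_b\sum_u\alpha_u^2$ and $\sum_b\alpha_b^2\sum_u\alpha_u^2$ combinations appear with the correct signs in $\Gamma_0,\Gamma_1$, which comes from separating the ``cross-client'' contribution of $\|\sum_b\alpha_b\sum_u\alpha_u\nabla f_u(\mathbf{w}_u^t)\|^2$ from the ``within-client'' contribution, and applying Assumption~3 at both hierarchical levels.
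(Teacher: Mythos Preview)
Your proposal is correct and follows essentially the same route as the paper: $\beta$-smoothness at the virtual iterate, the polarization identity for the cross term, the variance split of the quadratic term via Assumption~2, and the two-level drift decomposition $\|\mathbf{w}_u^t-\bar{\mathbf{w}}^t\|^2\le 2\|\mathbf{w}_u^t-\bar{\mathbf{w}}_b^t\|^2+2\|\bar{\mathbf{w}}_b^t-\bar{\mathbf{w}}^t\|^2$, followed by telescoping over $T$.

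One small discrepancy worth flagging: in the paper the self-referential absorption happens on the \emph{model-deviation} quantities, not on $\|\nabla f(\bar{\mathbf{w}}^t)\|^2$. Specifically, the local drift recursion produces a term $24\beta^2\eta^2\kappa_0^2\sum_{b,u}\alpha_b\alpha_u\mathbb{E}\|\mathbf{w}_u^t-\bar{\mathbf{w}}_b^t\|^2$ on the right, which is moved to the left and absorbed using $\eta\le\frac{1}{2\sqrt{3}\beta\kappa_0}$ (this is Lemma~1); the edge drift recursion similarly produces $40\beta^2\eta^2\kappa_1^2\kappa_0^2\sum_b\alpha_b\mathbb{E}\|\bar{\mathbf{w}}^t-\bar{\mathbf{w}}_b^t\|^2$, absorbed via $\eta\le\frac{1}{2\sqrt{5}\beta\kappa_1\kappa_0}$ (Lemma~2, where the five-way split you anticipate indeed appears). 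The ``mean-gradient'' piece $\|\sum_b\alpha_b\sum_u\alpha_u\nabla f_u(\mathbf{w}_u^t)\|^2$ arising from the quadratic term is not related back to $\|\nabla f(\bar{\mathbf{w}}^t)\|^2$; it simply cancels against the $-\tfrac{\eta}{2}$ copy coming from the polarization identity, leaving $-\tfrac{\eta}{2}(1-\beta\eta)\|\cdot\|^2$ which is then dropped. Your variant (pushing everything through to a residual $\|\nabla f(\bar{\mathbf{w}}^t)\|^2$ on the right and absorbing there) would also work, but the precise constants $\Gamma_0,\Gamma_1,\tilde\Gamma_0,\tilde\Gamma_1$ in the statement come out of the paper's two-lemma organization.
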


\begin{proof}
The proof is left in the appendix. 
\end{proof}

\begin{Remark}
The first two terms in (\ref{covBound_Eqn}) are analogous to standard \ac{sgd}: the first term captures changes in the loss functions, while the second term appears from the bounded variance assumption of the stochastic gradients. 
The third term also appears due to the bounded variance assumption of the stochastic gradients: $\Gamma_0$ and $\Gamma_1$ are the contribution from user-\ac{es} and \ac{es}-\ac{cs} hierarchy levels, respectively.
The fourth and fifth terms capture the divergence between the user-\ac{es} and \ac{es}-\ac{cs} loss functions, respectively, and arise from the statistical data heterogeneity.
\end{Remark}

\begin{figure}[!t]
\begin{subfigure}{0.242\textwidth}
    \centering
    \includegraphics[trim=5 6 40 20, clip, width=\linewidth]{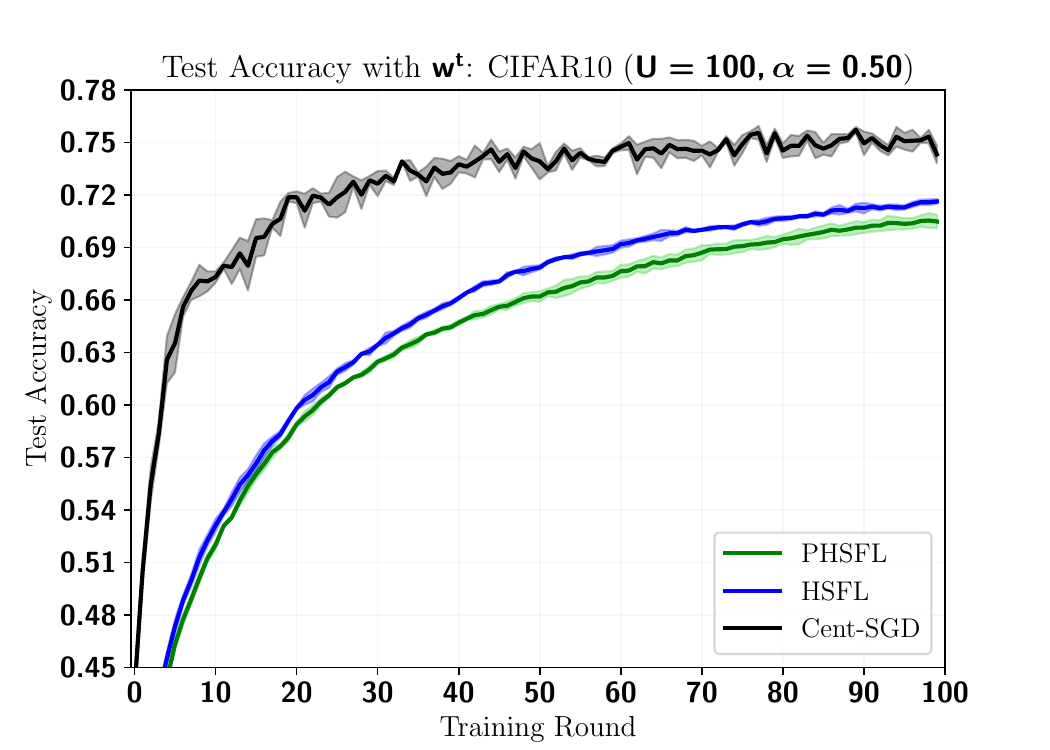}
    \caption{Test accuracy with $\mathbf{w}^t$}
    \label{trainVstestAcc_alpha_0.5}
\end{subfigure}
\begin{subfigure}{0.24\textwidth}
    \centering
    \includegraphics[trim=5 6 40 20, clip, width=\linewidth]{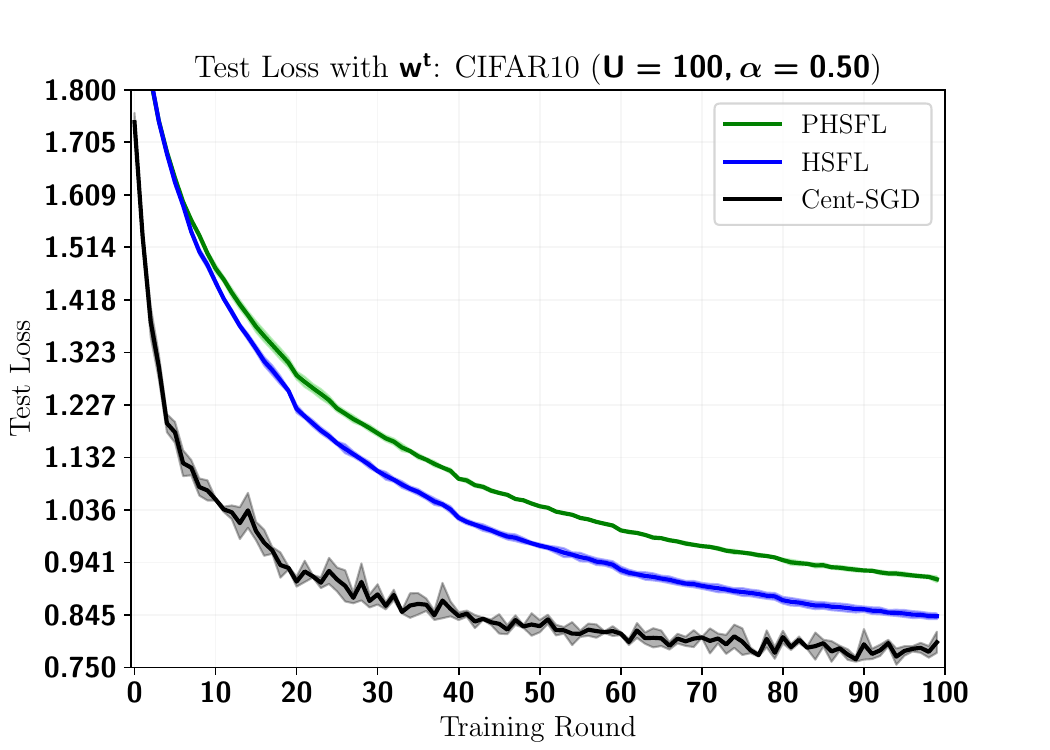}
    \caption{Test loss with $\mathbf{w}^t$}
    \label{trainVstestLoss_alpha_0.5}
\end{subfigure}
\caption{Test performance comparisons on CIFAR$10$ across $U=100$ users, when $\mathrm{Dir}(\pmb{\alpha}=\mathbf{0.5)}$}
\label{gRoundVstestPerformance_alpha_0.5}
\end{figure}

\section{Simulation Results and Discussions}

\subsection{Simulation Setting}
\noindent
We consider $4$ \acp{bs}/\acp{es}, each serving $25$ users.
For simplicity, we assume perfect communication between the users and the \acp{bs} for information exchange.
Besides, we use a symmetric Dirichlet distribution ${\tt Dir(\pmb{\alpha})}$, where $\pmb{\alpha}$ is the concentration parameter and determines the skewness, to distribute the training and test samples across these $U=100$ users following a similar strategy as in \cite{pervej2023resource}. 
We use the image classification task with the widely popular CIFAR$10$ dataset to evaluate the performance of our proposed \ac{phsfl} algorithm.
However, this can be easily extended to other applications and/or datasets.

We use a simple \ac{cnn} model as our \ac{ml} model $\mathbf{w}$ that has the following architecture: {\tt Conv2d (\#Channels, $64$) $\rightarrow$ ReLU() $\rightarrow$ MaxPool2d $\rightarrow$ Conv2d($64$,$128$) $\rightarrow$ ReLU() $\rightarrow$ MaxPool2d $\rightarrow$ FC($512$, $256$), $\rightarrow$ ReLU() $\rightarrow$ FC($256$, \#Labels)}. 
The model is split after the first {\tt MaxPool2d} layer.
As such, the \emph{client-side} model part is {\tt \tt Conv2d (\#Channels, $64$) $\rightarrow$ ReLU() $\rightarrow$ MaxPool2d}.
The rest of the model blocks belong to the \emph{server-side} model parts, where the \emph{classifier/head} is the output layer, i.e., {\tt FC($256$, \#Labels)}.
Finally, for the model training, we use {\tt \ac{sgd}} optimizer with $\eta=0.01$, $\mathrm{N}=32$, $N=5$, $\kappa_0=5$, $\kappa_1=3$ and $R=100$.

\begin{figure}[!t]
\begin{subfigure}{0.242\textwidth}
    \centering
    \includegraphics[trim=5 0 50 20, clip, width=\linewidth]{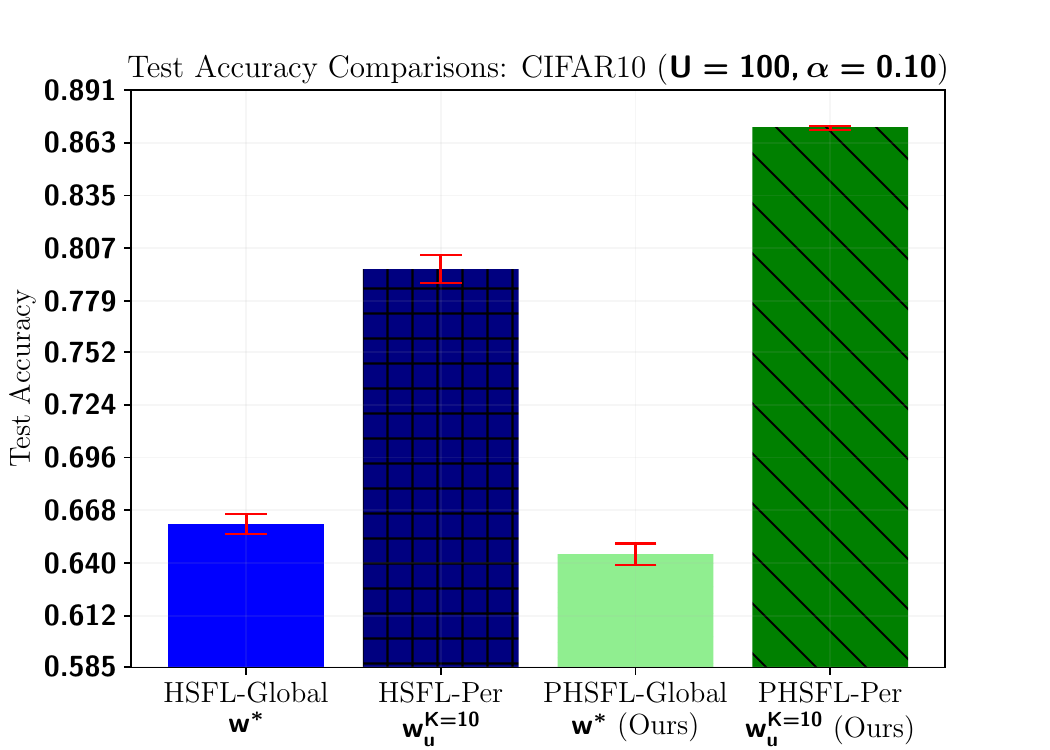}
    \caption{Test Accuracy Comparisons}
    \label{testAcc_alpha_0.1}
\end{subfigure}
\begin{subfigure}{0.24\textwidth}
    \centering
    \includegraphics[trim=5 0 50 20, clip, width=\linewidth]{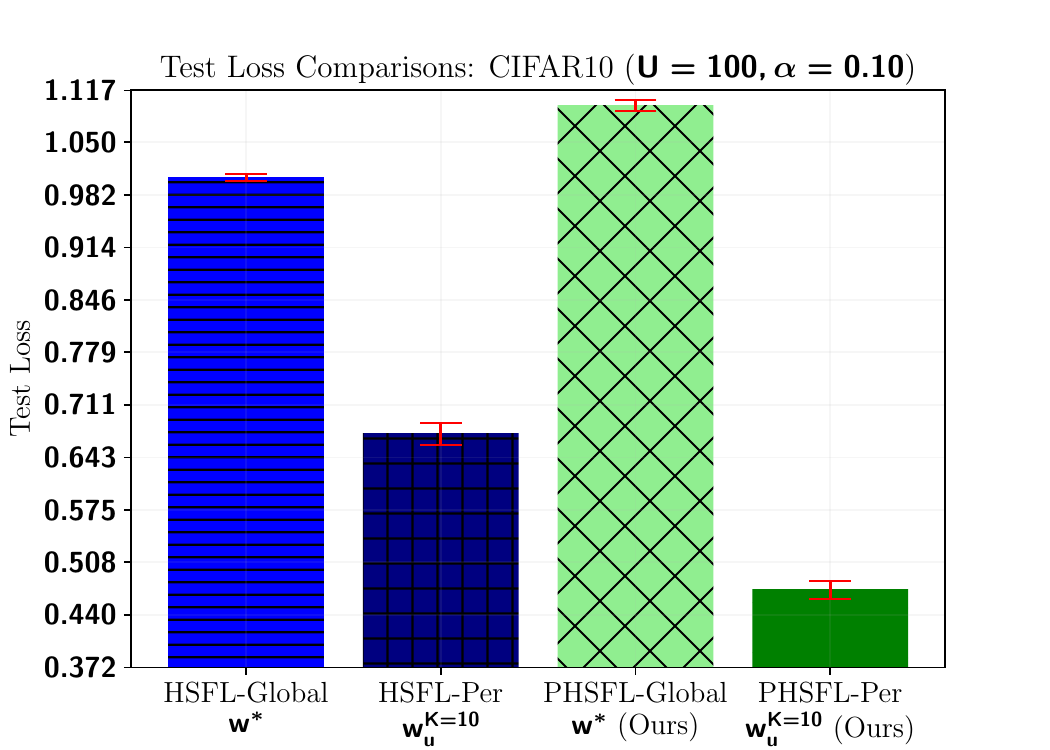}
    \caption{Test Loss Comparisons}
    \label{testLoss_alpha_0.1}
\end{subfigure}
\caption{Test performance comparisons on CIFAR$10$ across $U=100$ users, when $\mathrm{Dir}(\pmb{\alpha}=\mathbf{0.1)}$\bblue{: $\mathbf{w}^{*}$ and $\mathbf{w}_u^{K=10}$ represent the global trained model and fine-tuned personalized model, respectively}}
\label{testPerformance_alpha_0.1}
\end{figure}
\begin{figure}[!t]
\begin{subfigure}{0.242\textwidth}
    \centering
    \includegraphics[trim=5 0 50 20, clip, width=\linewidth]{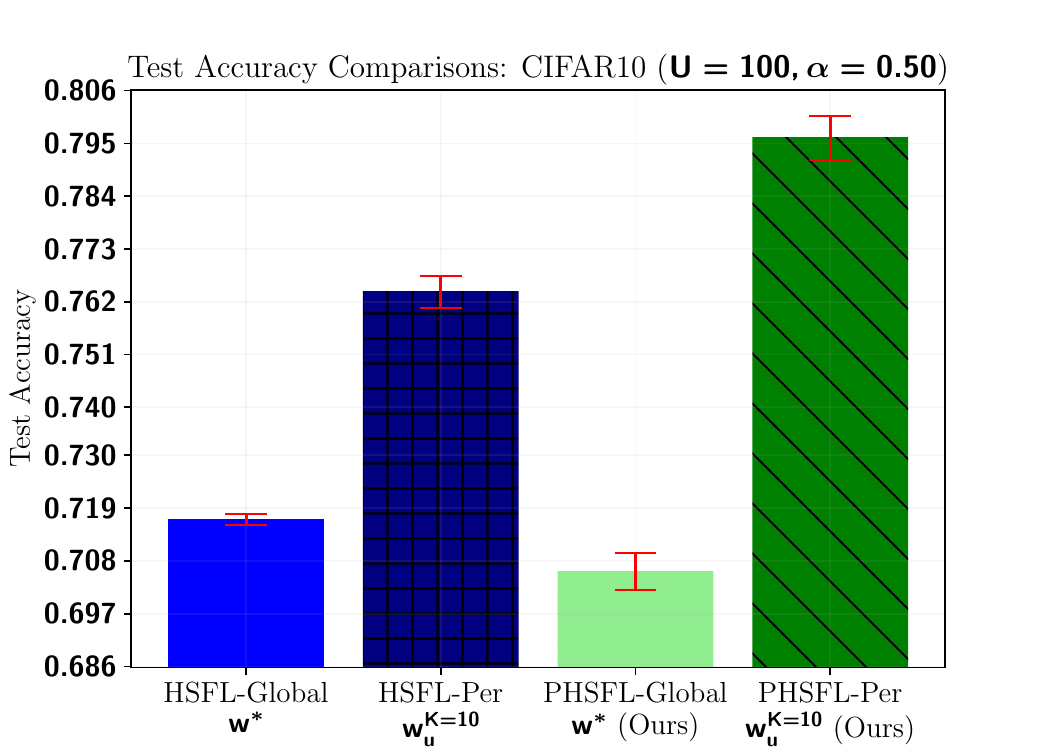}
    \caption{Test Accuracy Comparisons}
    \label{testAcc_alpha_0.5}
\end{subfigure}
\begin{subfigure}{0.24\textwidth}
    \centering
    \includegraphics[trim=5 0 50 20, clip, width=\linewidth]{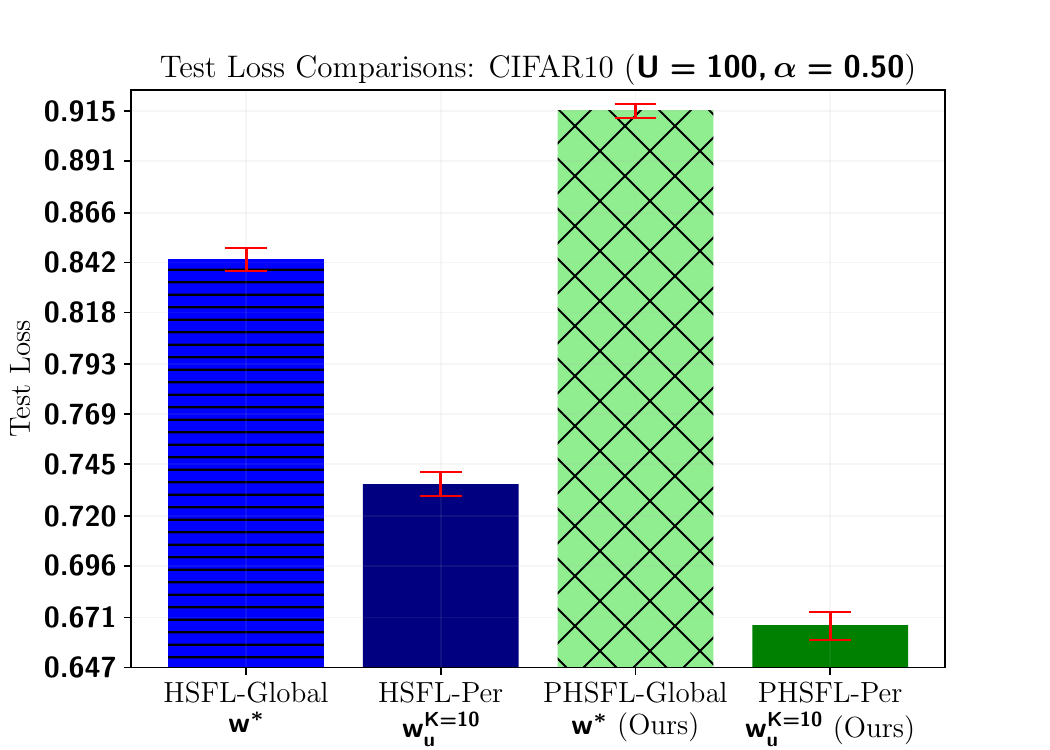}
    \caption{Test Loss Comparisons}
    \label{testLoss_alpha_0.5}
\end{subfigure}
\caption{Test performance comparisons on CIFAR$10$ across $U=100$ users, when $\mathrm{Dir}(\pmb{\alpha}=\mathbf{0.5)}$\bblue{: $\mathbf{w}^{*}$ and $\mathbf{w}_u^{K=10}$ represent the global trained model and fine-tuned personalized model, respectively}}
\label{testPerformance_alpha_0.5}
\end{figure}

\subsection{Performance Comparisons}
\noindent
In order to evaluate the performance, we adopt a \ac{hsfl} learning baseline based on \cite{khan2024joint,liao2024parallelsfl}.
More specifically, we assume the same system configurations, data distributions, and perfect communication between the clients and \acp{es} as in our proposed \ac{phsfl} algorithm.
%Besides, the same model and simulation parameters are used to get an apple-to-apple comparisons. 
Moreover, we also use a centralized \ac{sgd} baseline to get the performance upper bound, assuming a \emph{Genie} has access to all data samples. 
The \ac{sgd} algorithm iterates over the entire dataset (using $[\bigcup_{u\in \mathcal{U}} |\mathcal{D}_u|]/N$ mini-batches) for $R=100$ epochs.

First, we investigate the generalization performance with the globally trained model $\mathbf{w}^{t}$. 
Intuitively, as only the body parts of the model get trained with respect to a \emph{randomly initialized classifier}, the classification results may not be as accurate as when the entire model gets trained. 
This is due to the fact that the head/classifier does not update the initial weights as training progresses, as shown in (\ref{headUpdate}).
However, recall that our sole interest lies in having personalized models for the clients. 
Our simulation results in Fig. \ref{gRoundVstestPerformance_alpha_0.5} also validate this intuition. 
We observe a small gap between the generalized model performances from these two algorithms. 
For example, upon finishing $R=100$ global rounds, \ac{phsfl} and \ac{hsfl} have $0.7049 \pm 0.0039$ and $0.7163 \pm 0.0015$ test accuracies, respectively.
Besides, the test losses are $0.9104 \pm 0.0053$ and $0.8432 \pm 0.0053$, respectively, for \ac{phsfl} and \ac{hsfl}.
Moreover, both of these two algorithms have some clear performance deviations from the centralized \ac{sgd} baseline, which is expected since they do not have access to the entire training dataset.

We now investigate the personalized models that are fine-tuned from the globally trained models with \ac{phsfl} and \ac{hsfl} algorithms.
In particular, we fine-tune the classifier/head of the model using $K=10$ \ac{sgd} steps following the steps described in Section \ref{personalizationSteps}.
Note that since a higher skewness in the data distribution yields degraded generalized performance, model personalization shall benefit the most in such cases. 
Regardless of the data skewness, we expect \ac{phsfl} to perform better since it only learns the feature representations using the same classifier: since the features have similarities, severe data heterogeneity may not influence the lower model blocks. 
However, since \ac{hsfl} updates all model blocks, data heterogeneity may affect the weights of the classifier/head, which may complicate fine-tuning the global model with a small $K$.

To understand the impact of non-IID data distributions, we consider two cases: $\mathrm{Dir}(\pmb{\alpha}=\mathbf{0.1})$ and $\mathrm{Dir}(\pmb{\alpha}=\mathbf{0.5})$. 
Note that smaller $\pmb{\alpha}$ means more skewed data distribution. Our simulation results show the above trends in Figs. \ref{testPerformance_alpha_0.1} and \ref{testPerformance_alpha_0.5}.
When $\pmb{\alpha}=\mathbf{0.1}$, we observe that the mean test accuracies from the global models are $0.6606 \pm 0.0052$ and $0.64458 \pm 0.0058$, respectively, for \ac{hsfl} and \ac{phsfl}, while the corresponding test losses are $1.0049 \pm 0.0046$ and $1.0975 \pm 0.0073$.
However, the personalized mean test accuracies are $0.7958 \pm 0.0075$ and $0.8708 \pm 0.0012$, while the test losses are $0.6735 \pm 0.0138$ and $0.4721 \pm 0.0118$, respectively, for \ac{hsfl} and \ac{phsfl}.
Therefore, our proposed \ac{phsfl} solution has $9.43\%$ and $42.68\%$ improvement in test accuracy and test loss over \ac{hsfl}.
Besides, when $\pmb{\alpha}=\mathbf{0.5}$, the personalized models have $4.75\%$ and $9.03\%$ mean test accuracy improvement over the generalized global trained models, respectively, for \ac{hsfl} and \ac{phsfl}.

\acresetall
\section{Conclusions}
\noindent
We proposed a \ac{phsfl} algorithm that leverages \ac{sl} and \ac{hfl} and enables training an \ac{ml} model with resource-constrained distributed clients in heterogeneous wireless networks.
Our proposed algorithm trains the model to learn feature representations with a fixed classifier that never gets trained during the training phase. 
The fine-tuned classifier with the remaining trained global model blocks shows significant personalization performance improvement over other existing algorithms.

% \section*{Acknowledgment}
% \noindent
% The authors acknowledge the Center for Advanced Research Computing (CARC) at the University of Southern California for providing computing resources that have contributed to the research. URL: \url{https://carc.usc.edu}.

\bibliographystyle{IEEEtran}
\bibliography{reference}

% Generated by IEEEtran.bst, version: 1.14 (2015/08/26)
\begin{thebibliography}{10}
\providecommand{\url}[1]{#1}
\csname url@samestyle\endcsname
\providecommand{\newblock}{\relax}
\providecommand{\bibinfo}[2]{#2}
\providecommand{\BIBentrySTDinterwordspacing}{\spaceskip=0pt\relax}
\providecommand{\BIBentryALTinterwordstretchfactor}{4}
\providecommand{\BIBentryALTinterwordspacing}{\spaceskip=\fontdimen2\font plus
\BIBentryALTinterwordstretchfactor\fontdimen3\font minus
  \fontdimen4\font\relax}
\providecommand{\BIBforeignlanguage}[2]{{%
\expandafter\ifx\csname l@#1\endcsname\relax
\typeout{** WARNING: IEEEtran.bst: No hyphenation pattern has been}%
\typeout{** loaded for the language `#1'. Using the pattern for}%
\typeout{** the default language instead.}%
\else
\language=\csname l@#1\endcsname
\fi
#2}}
\providecommand{\BIBdecl}{\relax}
\BIBdecl

\bibitem{mcmahan2017communication}
B.~McMahan, E.~Moore, D.~Ramage, S.~Hampson, and B.~A.~y. Arcas,
  ``{Communication-Efficient Learning of Deep Networks from Decentralized
  Data},'' in \emph{Proc. AIStats}.\hskip 1em plus 0.5em minus 0.4em\relax
  PMLR, 2017.

\bibitem{hard2018federated}
A.~Hard, K.~Rao, R.~Mathews, S.~Ramaswamy, F.~Beaufays, S.~Augenstein,
  H.~Eichner, C.~Kiddon, and D.~Ramage, ``Federated learning for mobile
  keyboard prediction,'' \emph{arXiv preprint arXiv:1811.03604}, 2018.

\bibitem{kairouz2021advances}
P.~Kairouz \emph{et~al.}, ``Advances and open problems in federated learning,''
  \emph{Foundations and trends{\textregistered} in machine learning}, vol.~14,
  no. 1--2, pp. 1--210, 2021.

\bibitem{pervej2023resource}
M.~F. Pervej, R.~Jin, and H.~Dai, ``Resource constrained vehicular edge
  federated learning with highly mobile connected vehicles,'' \emph{IEEE J.
  Sel. Areas Commun.}, vol.~41, no.~6, pp. 1825--1844, 2023.

\bibitem{chen2020joint}
M.~Chen, Z.~Yang, W.~Saad, C.~Yin, H.~V. Poor, and S.~Cui, ``A joint learning
  and communications framework for federated learning over wireless networks,''
  \emph{IEEE Trans. Wireless Commun.}, vol.~20, no.~1, pp. 269--283, 2020.

\bibitem{lin2024split}
Z.~Lin, G.~Qu, X.~Chen, and K.~Huang, ``Split learning in 6g edge networks,''
  \emph{IEEE Wireless Commun.}, vol.~31, no.~4, pp. 170--176, 2024.

\bibitem{vepakomma2018split}
P.~Vepakomma, O.~Gupta, T.~Swedish, and R.~Raskar, ``Split learning for health:
  Distributed deep learning without sharing raw patient data,'' \emph{arXiv
  preprint arXiv:1812.00564}, 2018.

\bibitem{pervej2024hierarchical}
M.~F. Pervej, R.~Jin, and H.~Dai, ``Hierarchical federated learning in wireless
  networks: Pruning tackles bandwidth scarcity and system heterogeneity,''
  \emph{IEEE Trans. Wireless Commun.}, vol.~23, no.~9, pp. 11\,417--11\,432,
  2024.

\bibitem{vaswani2017attention}
A.~Vaswani \emph{et~al.}, ``Attention is all you need,'' \emph{Advances in
  NeurIPS}, 2017.

\bibitem{chen2024big}
Z.~Chen, Z.~Zhang, and Z.~Yang, ``Big {AI} models for {6G} wireless networks:
  Opportunities, challenges, and research directions,'' \emph{IEEE Wireless
  Commun.}, vol.~31, no.~5, pp. 164--172, 2024.

\bibitem{xu2024accelerating}
C.~Xu, J.~Li, Y.~Liu, Y.~Ling, and M.~Wen, ``Accelerating split federated
  learning over wireless communication networks,'' \emph{IEEE Trans. Wireless
  Commun.}, vol.~23, no.~6, pp. 5587--5599, 2024.

\bibitem{khan2024joint}
L.~U. Khan, M.~Guizani, A.~Al-Fuqaha, C.~S. Hong, D.~Niyato, and Z.~Han, ``A
  joint communication and learning framework for hierarchical split federated
  learning,'' \emph{IEEE Internet Things J.}, vol.~11, no.~1, pp. 268--282,
  2024.

\bibitem{liu2022wireless}
X.~Liu, Y.~Deng, and T.~Mahmoodi, ``Wireless distributed learning: A new hybrid
  split and federated learning approach,'' \emph{IEEE Trans. Wireless Commun.},
  vol.~22, no.~4, pp. 2650--2665, 2022.

\bibitem{liao2024parallelsfl}
Y.~Liao, Y.~Xu, H.~Xu, Z.~Yao, L.~Huang, and C.~Qiao, ``Parallelsfl: A novel
  split federated learning framework tackling heterogeneity issues,''
  \emph{arXiv preprint arXiv:2410.01256}, 2024.

\bibitem{xia2022hsfl}
T.~Xia, Y.~Deng, S.~Yue, J.~He, J.~Ren, and Y.~Zhang, ``Hsfl: an efficient
  split federated learning framework via hierarchical organization,'' in
  \emph{Proc. IEEE CNSM}, 2022.

\bibitem{ao2024federated}
H.~Ao, H.~Tian, and W.~Ni, ``Federated split learning for edge intelligence in
  resource-constrained wireless networks,'' \emph{IEEE Trans. Consumer
  Electronics}, 2024.

\bibitem{lin2024efficient}
Z.~Lin, G.~Zhu, Y.~Deng, X.~Chen, Y.~Gao, K.~Huang, and Y.~Fang, ``Efficient
  parallel split learning over resource-constrained wireless edge networks,''
  \emph{IEEE Trans. Mobile Comput.}, vol.~23, no.~10, pp. 9224--9239, 2024.

\bibitem{karimireddy2020scaffold}
S.~P. Karimireddy, S.~Kale, M.~Mohri, S.~Reddi, S.~Stich, and A.~T. Suresh,
  ``Scaffold: Stochastic controlled averaging for federated learning,'' in
  \emph{Proc. ICML}.\hskip 1em plus 0.5em minus 0.4em\relax PMLR, 2020.

\bibitem{han2023splitgp}
D.-J. Han, D.-Y. Kim, M.~Choi, C.~G. Brinton, and J.~Moon, ``Splitgp: Achieving
  both generalization and personalization in federated learning,'' in
  \emph{Proc. IEEE INFOCOM}.\hskip 1em plus 0.5em minus 0.4em\relax IEEE, 2023.

\bibitem{sun2021partialfed}
B.~Sun, H.~Huo, Y.~Yang, and B.~Bai, ``Partialfed: Cross-domain personalized
  federated learning via partial initialization,'' \emph{Advances in NeurIPS},
  vol.~34, pp. 23\,309--23\,320, 2021.

\bibitem{chen2023personalized}
H.~Chen, X.~Chen, L.~Peng, and Y.~Bai, ``Personalized fair split learning for
  resource-constrained internet of things,'' \emph{Sensors}, vol.~24, no.~1,
  p.~88, 2023.

\bibitem{oh2022fedbabu}
J.~Oh, S.~Kim, and S.-Y. Yun, ``Fed{BABU}: Toward enhanced representation for
  federated image classification,'' in \emph{Proc. ICLR}, 2022.

\bibitem{molisch2023wireless}
A.~F. Molisch, \emph{Wireless Communications: From Fundamentals to Beyond 5G},
  3rd~ed.\hskip 1em plus 0.5em minus 0.4em\relax IEEE Press - Wiley, 2023.

\bibitem{wang2020tackling}
J.~Wang, Q.~Liu, H.~Liang, G.~Joshi, and H.~V. Poor, ``Tackling the objective
  inconsistency problem in heterogeneous federated optimization,''
  \emph{Advances in NeurIPS}, vol.~33, pp. 7611--7623, 2020.

\bibitem{wang2022demystifying}
J.~Wang, S.~Wang, R.-R. Chen, and M.~Ji, ``Demystifying why local aggregation
  helps: Convergence analysis of hierarchical sgd,'' in \emph{Proc. AAAI},
  vol.~36, no.~8, 2022, pp. 8548--8556.

\bibitem{pervej2025resource}
M.~F. Pervej and A.~F. Molisch, ``Resource-aware hierarchical federated
  learning in wireless video caching networks,'' \emph{IEEE Trans. Wireless
  Commun.}, vol.~24, no.~1, pp. 165--180, 2025.

\bibitem{ye2023feddisco}
R.~Ye, M.~Xu, J.~Wang, C.~Xu, S.~Chen, and Y.~Wang, ``Feddisco: Federated
  learning with discrepancy-aware collaboration,'' in \emph{Proc. ICML}.\hskip
  1em plus 0.5em minus 0.4em\relax PMLR, 2023.

\bibitem{pervej2024personalized}
M.-F. Pervej and A.~F. Molisch, ``Personalized hierarchical split federated
  learning in wireless networks,'' \emph{arXiv preprint arXiv:2411.06042},
  2024.

\end{thebibliography}

 \newpage 
 \clearpage
 \appendices 
 \onecolumn 
 \section{Convergence Analysis}
 \subsection{Assumptions}
 \noindent
 We make the following standard assumptions \cite{wang2020tackling, pervej2024hierarchical,wang2022demystifying, pervej2025resource, ye2023feddisco} 
 \setcounter{Assumption}{0}
 \begin{Assumption}[Smoothness] 
 \label{AsumpSmoothNess}
     The loss functions are $\beta$-Lipschitz smooth, i.e., for some $\beta>0$, $\Vert \nabla f(\mathbf{w}) - \nabla f(\mathbf{w}') \Vert \leq \beta \Vert \mathbf{w} - \mathbf{w}' \Vert$, where $\Vert \cdot \Vert$ is the $L_2$ norm, for all loss functions.
 \end{Assumption}

 \begin{Assumption}[Unbiased gradient with bounded variance]
 \label{AsumpUnbiased}
     The mini-batch stochastic gradient calculated using client's randomly sampled mini-batch $\zeta$ is unbiased, i.e., $\mathbb{E}_{\zeta \sim \mathcal{D}_u} \left[g_u(\mathbf{w}) \right] = \nabla f_u (\mathbf{w})$, where $\mathbb{E} [\cdot]$ is the expectation operator.
     Besides, the variance of the gradients is bounded, i.e., $ \mathbb{E}_{\zeta \sim \mathcal{D}_u} \left[ \Vert \nabla f_u (\mathbf{w}) - g_u(\mathbf{w}) \Vert^2 \right] \leq \sigma^2 $, for some $\sigma \geq 0$ and all $u \in \mathcal{U}$.
 \end{Assumption}

 \begin{Assumption}[Bounded gradient divergence]
 \label{AsumpBoundedDivergence}
     The divergence between (a) local and \ac{es} loss functions and (b) \ac{es} and global loss functions are bounded as
     \begin{align}
         \sum\nolimits_{u \in \mathcal{U}_b} \alpha_u \Vert \nabla f_u (\mathbf{w}) - \nabla f_b (\mathbf{w}) \Vert^2 \leq \epsilon_0^2, \\
         \sum\nolimits_{b=1}^B \alpha_b \Vert \nabla f_b (\mathbf{w}) - \nabla f (\mathbf{w}) \Vert^2 \leq \epsilon_1^2, 
     \end{align}
     for some $\epsilon_0 \leq 0$, $\epsilon_1 \leq 0$ and all $u$ and $b$.
 \end{Assumption}

 Additionally, since the global and edge models do not exist in every \ac{sgd} step $t$, following standard practice \cite{pervej2024hierarchical,wang2022demystifying}, we assume that virtual copies of these models exist, denoted by $\bar{\mathbf{w}}^t$ and $\bar{\mathbf{w}}_b^t$, respectively.
 Moreover, the above assumptions also apply to these virtual models.

 \subsection{Convergence Analysis}
 \setcounter{Theorem}{0}
 \begin{Theorem}
 Suppose the above assumptions hold. Then if the learning rate satisfies $\eta < \frac{1}{2\sqrt{5} \beta \kappa_1 \kappa_0}$, the average global gradient norm is upper bounded by
 \begin{align}
 \label{covBound_App}
     \frac{1}{T} \sum_{t=0}^{T-1} \mathbb{E} \left[\left\Vert \nabla f (\bar{\mathbf{w}}^t) \right\Vert^2 \right] 
     &\leq \frac{2 \left(\mathbb{E} \left[f \left( \bar{\mathbf{w}}^0 \right) \right] - \mathbb{E} \left[f \left( \bar{\mathbf{w}}^{T} \right) \right]\right)}{\eta T} + \beta \eta \sigma^2 \sum_{b=0}^{B-1} \alpha_b^2 \sum_{u \in \mathcal{U}_b} \alpha_u^2 + \Gamma_0\sigma^2 + \Gamma_1\sigma^2 + 12 \beta^2 \epsilon_0^2 \eta^2 \kappa_0^2 + \nonumber \\
     &\qquad 20 \beta^2 \epsilon_1^2 \eta^2 \kappa_1^2 \kappa_0^2 + 240 \epsilon_0^2 \kappa_1^2 \beta^4 \eta^4 \kappa_0^4,
 \end{align}
 where $\Gamma_0 \coloneqq  4\beta^2 \eta^2 \kappa_0^2 - 4 \beta^2 \eta^2 \kappa_0^2  \sum_{b=0}^{B-1} \alpha_b \sum_{u \in \mathcal{U}_b} \alpha_{u}^2 + 80 \kappa_1^2 \beta^4 \eta^4 \kappa_0^4$, 
 $\Gamma_1 \coloneqq 4 \kappa_1 \kappa_0 \beta^2 \eta^2 \sum_{b=0}^{B-1} \alpha_b \sum_{u \in \mathcal{U}_b} \alpha_{u}^2 - 4 \kappa_1 \kappa_0 \beta^2 \eta^2 \sum_{b=0}^{B-1} \alpha_{b}^2 \sum_{u\in \mathcal{U}_{b}} \alpha_u^2 - 80 \kappa_1^2 \beta^4 \eta^4 \kappa_0^4 \sum_{b=0}^{B-1} \alpha_b \sum_{u \in \mathcal{U}_b} \alpha_{u}^2$. 
 \end{Theorem}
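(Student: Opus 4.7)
The plan is to follow the standard two-level hierarchical FL convergence template, adapted to the split FL setting of this paper. Because the classifier head is frozen and the cut-layer split is exact, the concatenated effective update for each client behaves as a single SGD step with respect to only the trainable coordinates, so the split structure does not change the shape of the descent inequality; it merely restricts which coordinates are updated. Under the notational convention $f_u(\mathbf{w}^t_u)$ and $g_u(\mathbf{w}^t_u)$ introduced just before the assumptions, the argument collapses to a two-level HFL convergence analysis on the virtual iterates $\bar{\mathbf{w}}^t$ and $\bar{\mathbf{w}}^t_b$.

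First I would start from $\beta$-smoothness (Assumption 1) applied at $\bar{\mathbf{w}}^t$, yielding the standard descent inequality $\mathbb{E}[f(\bar{\mathbf{w}}^{t+1})] \leq \mathbb{E}[f(\bar{\mathbf{w}}^t)] + \mathbb{E}[\langle \nabla f(\bar{\mathbf{w}}^t), \bar{\mathbf{w}}^{t+1}-\bar{\mathbf{w}}^t\rangle] + (\beta/2)\mathbb{E}[\Vert \bar{\mathbf{w}}^{t+1}-\bar{\mathbf{w}}^t\Vert^2]$, where the virtual increment is $\bar{\mathbf{w}}^{t+1}-\bar{\mathbf{w}}^t = -\eta\sum_b \alpha_b\sum_{u\in\mathcal{U}_b}\alpha_u g_u(\mathbf{w}^t_u)$. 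By the unbiasedness of Assumption 2, the cross term becomes $-\eta\mathbb{E}[\langle \nabla f(\bar{\mathbf{w}}^t), \sum_b \alpha_b\sum_u \alpha_u \nabla f_u(\mathbf{w}^t_u)\rangle]$; applying the identity $2\langle a,b\rangle = \Vert a\Vert^2+\Vert b\Vert^2-\Vert a-b\Vert^2$ together with $\beta$-smoothness converts it into a $-(\eta/2)\Vert \nabla f(\bar{\mathbf{w}}^t)\Vert^2$ descent term plus a drift penalty controlled by $E_t := \sum_b \alpha_b\sum_u\alpha_u \mathbb{E}[\Vert \mathbf{w}^t_u - \bar{\mathbf{w}}^t\Vert^2]$. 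The variance term splits, via independence of per-client noise, into the noise contribution $\beta\eta^2\sigma^2 \sum_b\alpha_b^2\sum_u\alpha_u^2$ (the second term of the statement) plus a deterministic part that again reduces to $E_t$.

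The heart of the argument, and the main obstacle, is bounding $E_t$. I would triangle-split it as $\Vert \mathbf{w}^t_u-\bar{\mathbf{w}}^t\Vert^2 \leq 2\Vert \mathbf{w}^t_u - \bar{\mathbf{w}}^t_b\Vert^2 + 2\Vert \bar{\mathbf{w}}^t_b - \bar{\mathbf{w}}^t\Vert^2$ and treat the two pieces separately. For the user-to-edge drift I would unroll the local SGD recursion back to the last edge synchronization (at most $\kappa_0$ steps), extracting (a) an $\eta^2\kappa_0\sigma^2$ noise term from Assumption 2, (b) a self-referential drift term via $\beta$-smoothness, and (c) an $\epsilon_0^2$ heterogeneity term via Assumption 3. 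The distinction between the $\sum_b \alpha_b^2\sum_u\alpha_u^2$ weight appearing on the noise-independent contribution and the $\sum_b \alpha_b\sum_u\alpha_u^2$ weight that surfaces after the $\alpha_u$-summation is exactly what produces the subtractive pieces $-4\beta^2\eta^2\kappa_0^2\sum_b\alpha_b\sum_u\alpha_u^2$ in $\Gamma_0$. For the edge-to-global drift I would repeat the unrolling over at most $\kappa_1\kappa_0$ steps since the last global synchronization, invoke the $\epsilon_1^2$ divergence from Assumption 3, and substitute the already-derived user-to-edge bound into the right-hand side. This nesting is precisely what produces the higher-order $80\kappa_1^2\beta^4\eta^4\kappa_0^4$ factor in $\Gamma_1$ and the composite $\tilde\Gamma_0 = 12\beta^2\eta^2\kappa_0^2(1+20\kappa_0^2\kappa_1^2\beta^2\eta^2)$. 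The step-size condition $\eta < 1/(2\sqrt{5}\beta\kappa_1\kappa_0)$ is exactly the threshold that makes both the user-level and edge-level recursions contractive, so that the drift bound can be solved in closed form. Substituting this back into the descent inequality, rearranging to isolate $\mathbb{E}[\Vert \nabla f(\bar{\mathbf{w}}^t)\Vert^2]$, telescoping over $t=0,\dots,T-1$, dividing by $T$, and collecting the coefficients of $\sigma^2$, $\epsilon_0^2$ and $\epsilon_1^2$ yields the bound in (\ref{covBound_Eqn}).

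The main technical obstacles I anticipate are: (i) correctly tracking the $\sum_b\alpha_b^2\sum_u\alpha_u^2$ versus $\sum_b\alpha_b\sum_u\alpha_u^2$ coefficient distinctions, since conflating them would misreport the subtracted terms in $\Gamma_0$ and $\Gamma_1$; (ii) keeping the two-level unrolling honest about the number of SGD steps since each level of synchronization, so that no extra power of $\kappa_0$ or $\kappa_1$ is spuriously gained or lost when the user-level bound is plugged into the edge-level recursion; and (iii) verifying that $1/(2\sqrt{5}\beta\kappa_1\kappa_0)$ is the sharp threshold produced by simultaneously contracting the nested user-level and edge-level drift recursions, rather than a looser threshold from treating them sequentially. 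Once these bookkeeping points are settled, the remainder of the proof is essentially routine expansion and term collection.
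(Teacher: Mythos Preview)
Your proposal is correct and follows essentially the same route as the paper's proof: start from $\beta$-smoothness at the virtual global iterate, use the polarization identity and Assumption~2 to isolate the descent term and the noise term $\beta\eta\sigma^2\sum_b\alpha_b^2\sum_u\alpha_u^2$, reduce the remaining drift to the two-level quantity you call $E_t$, triangle-split it into user--edge and edge--global pieces, bound each by unrolling to the last synchronization point (this is exactly the content of the paper's Lemmas~\ref{lemma1App} and~\ref{lemma2App}), nest the first bound into the second, invoke $\eta<1/(2\sqrt{5}\beta\kappa_1\kappa_0)$ to close the self-referential recursions, and telescope. The only minor discrepancy is that the deterministic part of the squared-increment term is not ``reduced to $E_t$'' in the paper but rather combined with the matching piece from the cross term and dropped (since $-\tfrac{\eta}{2}(1-\beta\eta)\Vert\cdot\Vert^2\le 0$ when $\eta\le 1/\beta$); this does not affect the final bound or the structure of your argument.
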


 \begin{proof}
 \noindent
 Although the global model $\mathbf{w}^t$ and the \ac{es} model $\mathbf{w}_b^t$ does not exists in all $t$, for our convergence analysis, we assume that virtual copies of these models are available.
 Denote the global and \ac{es} virtual models by $\Bar{\mathbf{w}}^t$ and $\Bar{\mathbf{w}}_b^t$.

 \bigskip
 The aggregation rules with these virtual models are as
 \begin{align}
     \bar{\mathbf{w}}_b^{t+1} &= \sum_{u \in \mathcal{U}_b} \alpha_u \mathbf{w}_u^{t+1} = \sum_{u \in \mathcal{U}_b} \alpha_u \left[\mathbf{w}_u^{t} - \eta g_{u} (\mathbf{w}_{u}^t) \right] = \bar{\mathbf{w}}_b^t - \eta \sum_{u \in \mathcal{U}_b} \alpha_u g_{u} (\mathbf{w}_{u}^t), \\ 
     \bar{\mathbf{w}}^{t+1} &= \sum_{b=0}^{B-1} \alpha_b \bar{\mathbf{w}}_b^{t+1} = \sum_{b=0}^{B-1} \alpha_b \left[\bar{\mathbf{w}}_b^t - \eta \sum_{u \in \mathcal{U}_b} \alpha_u g_u(\mathbf{w}_{u}^t) \right] = \bar{\mathbf{w}}^t - \eta \sum_{b=0}^{B-1} \alpha_b \sum_{u \in \mathcal{U}_b} \alpha_u g_{u} (\mathbf{w}_{u}^t), \label{globalAggVir} \\ 
 \end{align}
 where $\mathbf{w}_{u}^t = [\mathbf{w}_{u,0}^t; \mathbf{w}_{u,1}^t]$.

 Using the aggregation rule in (\ref{globalAggVir}), we write
 \begin{align}
     f(\bar{\mathbf{w}}^{t+1}) 
     &= f \left( \bar{\mathbf{w}}^t - \eta \sum_{b=0}^{B-1} \alpha_b \sum_{u \in \mathcal{U}_b} \alpha_u g_{u} (\mathbf{w}_{u}^t) \right) \nonumber\\
     &\overset{(a)}{\leq} f \left( \bar{\mathbf{w}}^t \right) + \left< \nabla f (\bar{\mathbf{w}}^t), -\eta \sum_{b=0}^{B-1} \alpha_b \sum_{u \in \mathcal{U}_b} \alpha_u g_{u} (\mathbf{w}_{u}^t) \right> + \frac{\beta}{2} \left \Vert -\eta \sum_{b=0}^{B-1} \alpha_b \sum_{u \in \mathcal{U}_b} \alpha_u g_{u} (\mathbf{w}_{u}^t) \right\Vert^2 \nonumber\\
     &=f \left( \bar{\mathbf{w}}^t \right) - \eta \left< \nabla f (\bar{\mathbf{w}}^t), \sum_{b=0}^{B-1} \alpha_b \sum_{u \in \mathcal{U}_b} \alpha_u g_{u} (\mathbf{w}_{u}^t) \right> + \frac{\beta \eta^2}{2} \left \Vert \sum_{b=0}^{B-1} \alpha_b \sum_{u \in \mathcal{U}_b} \alpha_u g_{u} (\mathbf{w}_{u}^t) \right\Vert^2, 
 \end{align}
 where $(a)$ stems from $\beta$-Lipschitz smoothness, i.e., Assumption \ref{AsumpSmoothNess}.

 Taking expectation over both sides,
 \begin{align}
 \label{convExpectationTerm}
     \mathbb{E} \left[ f \left(\bar{\mathbf{w}}^{t+1}\right) \right] = \mathbb{E} \left[f \left( \bar{\mathbf{w}}^t \right) \right] - \eta \mathbb{E} \left[\left< \nabla f (\bar{\mathbf{w}}^t), \sum_{b=0}^{B-1} \alpha_b \sum_{u \in \mathcal{U}_b} \alpha_u g_{u} (\mathbf{w}_{u}^t) \right> \right] + \frac{\beta \eta^2}{2} \mathbb{E} \left[ \left \Vert \sum_{b=0}^{B-1} \alpha_b \sum_{u \in \mathcal{U}_b} \alpha_u g_{u} (\mathbf{w}_{u}^t) \right\Vert^2\right]. 
 \end{align}

 Now, we simplify the second term of (\ref{convExpectationTerm}) as
 \begin{align}
 \label{innerProdTerm}
     &- \eta \mathbb{E} \left[\left< \nabla f (\bar{\mathbf{w}}^t), \sum_{b=0}^{B-1} \alpha_b \sum_{u \in \mathcal{U}_b} \alpha_u g_{u} (\mathbf{w}_{u}^t) \right> \right] \nonumber \\
     &=- \eta  \left< \nabla f (\bar{\mathbf{w}}^t), \sum_{b=0}^{B-1} \alpha_b \sum_{u \in \mathcal{U}_b} \alpha_u \mathbb{E} \left[g_{u} (\mathbf{w}_{u}^t) \right]\right> \nonumber\\
     &\overset{(a)}{=}- \eta  \left< \nabla f (\bar{\mathbf{w}}^t), \sum_{b=0}^{B-1} \alpha_b \sum_{u \in \mathcal{U}_b} \alpha_u \nabla f_{u} (\mathbf{w}_{u}^t) \right> \nonumber\\
     &\overset{(b)}{=} \frac{-\eta}{2} \left[ \left\Vert \nabla f (\bar{\mathbf{w}}^t) \right\Vert^2 + \left\Vert \sum_{b=0}^{B-1} \alpha_b \sum_{u \in \mathcal{U}_b} \alpha_u \nabla f_{u} (\mathbf{w}_{u}^t) \right\Vert^2 - \left\Vert \nabla f (\bar{\mathbf{w}}^t) - \sum_{b=0}^{B-1} \alpha_b \sum_{u \in \mathcal{U}_b} \alpha_u \nabla f_{u} (\mathbf{w}_{u}^t)\right\Vert^2\right] \nonumber\\
     &= \frac{\eta}{2}\left\Vert \nabla f (\bar{\mathbf{w}}^t) - \sum_{b=0}^{B-1} \alpha_b \sum_{u \in \mathcal{U}_b} \alpha_u \nabla f_{u} (\mathbf{w}_{u}^t)\right\Vert^2 - \frac{\eta}{2} \left\Vert \nabla f (\bar{\mathbf{w}}^t) \right\Vert^2 - \frac{\eta}{2} \left\Vert \sum_{b=0}^{B-1} \alpha_b \sum_{u \in \mathcal{U}_b} \alpha_u \nabla f_{u} (\mathbf{w}_{u}^t) \right\Vert^2 
 \end{align}
 where $(a)$ and $(b)$ are true due to the unbiased stochastic gradients assumption and the fact that $\Vert \mathbf{a} - \mathbf{b} \Vert^2 = \Vert \mathbf{a} \Vert^2 + \Vert \mathbf{b} \Vert^2 - 2<\mathbf{a},\mathbf{b}> $.

 Now, we simplify the third term of (\ref{convExpectationTerm}) as
 \begin{align}
 \label{third_Term}
     &\frac{\beta \eta^2}{2} \mathbb{E} \left[ \left \Vert \sum_{b=0}^{B-1} \alpha_b \sum_{u \in \mathcal{U}_b} \alpha_u g_{u} (\mathbf{w}_{u}^t) \right\Vert^2\right] \nonumber\\
     &\overset{(a)}{=} \frac{\beta \eta^2}{2} \left[ \mathbb{E} \left[ \left \Vert \sum_{b=0}^{B-1} \alpha_b \sum_{u \in \mathcal{U}_b} \alpha_u g_{u} (\mathbf{w}_{u}^t) - \mathbb{E} \left[\sum_{b=0}^{B-1} \alpha_b \sum_{u \in \mathcal{U}_b} \alpha_u g_{u} (\mathbf{w}_{u}^t) \right] \right\Vert^2\right] + \left(\mathbb{E} \left[\sum_{b=0}^{B-1} \alpha_b \sum_{u \in \mathcal{U}_b} \alpha_u g_{u} (\mathbf{w}_{u}^t) \right]\right)^2 \right] \nonumber\\
     &\overset{(b)}{=} \frac{\beta \eta^2}{2} \left[ \mathbb{E} \left[ \left \Vert \sum_{b=0}^{B-1} \alpha_b \sum_{u \in \mathcal{U}_b} \alpha_u g_{u} (\mathbf{w}_{u}^t) - \sum_{b=0}^{B-1} \alpha_b \sum_{u \in \mathcal{U}_b} \alpha_u \nabla f_{u} (\mathbf{w}_{u}^t) \right\Vert^2 \right] + \left( \sum_{b=0}^{B-1} \alpha_b \sum_{u \in \mathcal{U}_b} \alpha_u \nabla f_{u} (\mathbf{w}_{u}^t) \right)^2 \right] \nonumber\\
     &=\frac{\beta \eta^2}{2} \left[ \mathbb{E} \left[ \left \Vert \sum_{b=0}^{B-1} \alpha_b \sum_{u \in \mathcal{U}_b} \alpha_u \left( g_{u} (\mathbf{w}_{u}^t) - \nabla f_{u} (\mathbf{w}_{u}^t)\right) \right\Vert^2 \right] + \left\Vert \sum_{b=0}^{B-1} \alpha_b \sum_{u \in \mathcal{U}_b} \alpha_u \nabla f_{u} (\mathbf{w}_{u}^t) \right\Vert^2 \right] \nonumber\\
     &\overset{(c)}{=} \frac{\beta \eta^2}{2} \left[ \sum_{b=0}^{B-1} \alpha_b^2 \sum_{u \in \mathcal{U}_b} \alpha_u^2 \mathbb{E} \left[ \left \Vert g_{u} (\mathbf{w}_{u}^t) - \nabla f_{u} (\mathbf{w}_{u}^t) \right\Vert^2 \right] + \left\Vert \sum_{b=0}^{B-1} \alpha_b \sum_{u \in \mathcal{U}_b} \alpha_u \nabla f_{u} (\mathbf{w}_{u}^t) \right\Vert^2 \right] \nonumber\\
     &\overset{(d)}{\leq} \frac{\beta \eta^2}{2} \left[ \sum_{b=0}^{B-1} \alpha_b^2 \sum_{u \in \mathcal{U}_b} \alpha_u^2 \cdot \sigma^2 + \left\Vert \sum_{b=0}^{B-1} \alpha_b \sum_{u \in \mathcal{U}_b} \alpha_u \nabla f_{u} (\mathbf{w}_{u}^t) \right\Vert^2 \right] \nonumber\\
     &=\frac{\beta \eta^2 \sigma^2}{2} \sum_{b=0}^{B-1} \alpha_b^2 \sum_{u \in \mathcal{U}_b} \alpha_u^2 + \frac{\beta \eta^2}{2} \left\Vert \sum_{b=0}^{B-1} \alpha_b \sum_{u \in \mathcal{U}_b} \alpha_u \nabla f_{u} (\mathbf{w}_{u}^t) \right\Vert^2, 
 \end{align}
 where $(a)$ is from the definition of variance; $(b)$ stems from Assumption \ref{AsumpUnbiased}; $(c)$ stems from the independence of stochastic gradients and Assumption \ref{AsumpUnbiased}; $(d)$ is true due to the bounded variance of the gradients assumption.

 Now, plugging (\ref{innerProdTerm}) and (\ref{third_Term}) into (\ref{convExpectationTerm}), we get
 \begin{align}
 \label{convExpectationTerm_1}
     \mathbb{E} \left[ f \left(\bar{\mathbf{w}}^{t+1}\right) \right] \leq 
     & \mathbb{E} \left[f \left( \bar{\mathbf{w}}^t \right) \right] + \frac{\eta}{2}\left\Vert \nabla f (\bar{\mathbf{w}}^t) - \sum_{b=0}^{B-1} \alpha_b \sum_{u \in \mathcal{U}_b} \alpha_u \nabla f_{u} (\mathbf{w}_{u}^t)\right\Vert^2 - \frac{\eta}{2} \left\Vert \nabla f (\bar{\mathbf{w}}^t) \right\Vert^2 - \frac{\eta}{2} \left\Vert \sum_{b=0}^{B-1} \alpha_b \sum_{u \in \mathcal{U}_b} \alpha_u \nabla f_{u} (\mathbf{w}_{u}^t) \right\Vert^2 
     + \nonumber\\
     &\frac{\beta \eta^2 \sigma^2}{2} \sum_{b=0}^{B-1} \alpha_b^2 \sum_{u \in \mathcal{U}_b} \alpha_u^2 + \frac{\beta \eta^2}{2} \left\Vert \sum_{b=0}^{B-1} \alpha_b \sum_{u \in \mathcal{U}_b} \alpha_u \nabla f_{u} (\mathbf{w}_{u}^t) \right\Vert^2 \nonumber\\
     &= \mathbb{E} \left[f \left( \bar{\mathbf{w}}^t \right) \right] + \frac{\beta \eta^2 \sigma^2}{2} \sum_{b=0}^{B-1} \alpha_b^2 \sum_{u \in \mathcal{U}_b} \alpha_u^2 + \frac{\eta}{2}\left\Vert \nabla f (\bar{\mathbf{w}}^t) - \sum_{b=0}^{B-1} \alpha_b \sum_{u \in \mathcal{U}_b} \alpha_u \nabla f_{u} (\mathbf{w}_{u}^t)\right\Vert^2 - \frac{\eta}{2} \left\Vert \nabla f (\bar{\mathbf{w}}^t) \right\Vert^2 \nonumber\\
     & - \frac{\eta}{2} \left(1 - \beta\eta \right)\left\Vert \sum_{b=0}^{B-1} \alpha_b \sum_{u \in \mathcal{U}_b} \alpha_u \nabla f_{u} (\mathbf{w}_{u}^t) \right\Vert^2 
 \end{align}

 When $\eta \leq \frac{1}{\beta}$, we have $(1 - \beta\eta) \geq 0$. 
 As $\Vert \cdot \Vert$ is non-negative and we are after a lower-bound, we drop the last term from (\ref{convExpectationTerm_1}).
 Then, rearranging the terms, dividing both sides by $\frac{\eta}{2}$, and taking expectation on both sides, we get 
 \begin{align}
 \label{convBound_Interm}
     % \frac{\eta}{2} \left\Vert \nabla f (\bar{\mathbf{w}}^t) \right\Vert^2 \leq
     % & \mathbb{E} \left[f \left( \bar{\mathbf{w}}^t \right) \right] - \mathbb{E} \left[f \left( \bar{\mathbf{w}}^{t+1} \right) \right] + \frac{\beta \eta^2 \sigma^2}{2} \sum_{b=0}^{B-1} \alpha_b^2 \sum_{u \in \mathcal{U}_b} \alpha_u^2 + \frac{\eta}{2}\left\Vert \nabla f (\bar{\mathbf{w}}^t) - \sum_{b=0}^{B-1} \alpha_b \sum_{u \in \mathcal{U}_b} \alpha_u \nabla f_{u} (\mathbf{w}_{u}^t)\right\Vert^2 \\
     \mathbb{E} \left[\left\Vert \nabla f (\bar{\mathbf{w}}^t) \right\Vert^2 \right] \leq
     & \frac{2 \left(\mathbb{E} \left[f \left( \bar{\mathbf{w}}^t \right) \right] - \mathbb{E} \left[f \left( \bar{\mathbf{w}}^{t+1} \right) \right]\right)}{\eta} + \beta \eta \sigma^2 \sum_{b=0}^{B-1} \alpha_b^2 \sum_{u \in \mathcal{U}_b} \alpha_u^2 + \mathbb{E} \left[\left\Vert \nabla f (\bar{\mathbf{w}}^t) - \sum_{b=0}^{B-1} \alpha_b \sum_{u \in \mathcal{U}_b} \alpha_u \nabla f_{u} (\mathbf{w}_{u}^t)\right\Vert^2 \right].
 \end{align}

 We now simplify the last term of (\ref{convBound_Interm}) as

 \begin{align}
 \label{convBound_Interm_0}
     &\mathbb{E} \left[ \left\Vert \nabla f (\bar{\mathbf{w}}^t) - \sum_{b=0}^{B-1} \alpha_b \sum_{u \in \mathcal{U}_b} \alpha_u \nabla f_{u} (\mathbf{w}_{u}^t)\right\Vert^2 \right] \nonumber\\
     &\overset{(a)}{=} \mathbb{E} \left[ \left\Vert \sum_{b=0}^{B-1} \alpha_b \sum_{u \in \mathcal{U}_b} \alpha_u \nabla f_u (\bar{\mathbf{w}}^t) - \sum_{b=0}^{B-1} \alpha_b \sum_{u \in \mathcal{U}_b} \alpha_u \nabla f_{u} (\mathbf{w}_{u}^t)\right\Vert^2 \right] \nonumber\\
     &\overset{(b)}{\leq} \sum_{b=0}^{B-1} \alpha_b \sum_{u \in \mathcal{U}_b} \alpha_u ~ \mathbb{E} \left[ \left\Vert \nabla f_u (\bar{\mathbf{w}}^t) - \nabla f_{u} (\mathbf{w}_{u}^t)\right\Vert^2 \right] \nonumber\\
     &= \sum_{b=0}^{B-1} \alpha_b \sum_{u \in \mathcal{U}_b} \alpha_u ~ \mathbb{E} \left[ \left\Vert \nabla f_u (\bar{\mathbf{w}}^t) - \nabla f_u (\bar{\mathbf{w}}_b^t) + \nabla f_u (\bar{\mathbf{w}}_b^t) - \nabla f_{u} (\mathbf{w}_{u}^t)\right\Vert^2 \right] \nonumber\\
     &\overset{(c)}{\leq} 2 \sum_{b=0}^{B-1} \alpha_b \sum_{u \in \mathcal{U}_b} \alpha_u ~ \mathbb{E} \left[ \left\Vert \nabla f_u (\bar{\mathbf{w}}^t) - \nabla f_u (\bar{\mathbf{w}}_b^t) \right\Vert^2 \right] + 2 \sum_{b=0}^{B-1} \alpha_b \sum_{u \in \mathcal{U}_b} \alpha_u ~ \mathbb{E} \left[ \left\Vert \nabla f_u (\bar{\mathbf{w}}_b^t) - \nabla f_{u} (\mathbf{w}_{u}^t)\right\Vert^2 \right] \nonumber\\
     &\overset{(d)}{\leq} 2 \beta^2 \sum_{b=0}^{B-1} \alpha_b \sum_{u \in \mathcal{U}_b} \alpha_u ~ \mathbb{E} \left[ \left\Vert \bar{\mathbf{w}}^t - \bar{\mathbf{w}}_b^t \right\Vert^2 \right] + 2 \beta^2 \sum_{b=0}^{B-1} \alpha_b \sum_{u \in \mathcal{U}_b} \alpha_u ~ \mathbb{E} \left[ \left\Vert \bar{\mathbf{w}}_b^t - \mathbf{w}_{u}^t \right\Vert^2 \right] \nonumber\\
     &= 2 \beta^2 \sum_{b=0}^{B-1} \alpha_b ~ \mathbb{E} \left[ \left\Vert \bar{\mathbf{w}}_b^t - \mathbf{w}_{u}^t \right\Vert^2 \right] + 2 \beta^2 \sum_{b=0}^{B-1} \alpha_b \sum_{u \in \mathcal{U}_b} \alpha_u ~ \mathbb{E} \left[ \left\Vert \bar{\mathbf{w}}^t - \bar{\mathbf{w}}_b^t \right\Vert^2 \right] 
 \end{align}
 where $(a)$ comes from the fact that $f (\mathbf{w}) = \sum_{b=0}^{B-1} \alpha_b \sum_{u \in \mathcal{U}_b} \alpha_u f_u (\mathbf{w})$ by definition.
 Besides, $(b)$ stems from the convexity of $\Vert \cdot \Vert$ and Jensen inequality.
 In $(c)$, we use $\Vert \sum_{i=1}^I \mathbf{a}_i \Vert^2 \leq I  \sum_{i=1}^I \Vert \mathbf{a}_i \Vert^2 $.
 The last inequality appears from Assumption \ref{AsumpSmoothNess}.

 Plugging (\ref{convBound_Interm_0}) into (\ref{convBound_Interm}), we get
 \begin{align}
     \mathbb{E} \left[\left\Vert \nabla f (\bar{\mathbf{w}}^t) \right\Vert^2 \right] \leq
     & \frac{2 \left(\mathbb{E} \left[f \left( \bar{\mathbf{w}}^t \right) \right] - \mathbb{E} \left[f \left( \bar{\mathbf{w}}^{t+1} \right) \right]\right)}{\eta} + \beta \eta \sigma^2 \sum_{b=0}^{B-1} \alpha_b^2 \sum_{u \in \mathcal{U}_b} \alpha_u^2 + 2 \beta^2 \sum_{b=0}^{B-1} \alpha_b ~ \mathbb{E} \left[ \left\Vert \bar{\mathbf{w}}_b^t - \mathbf{w}_{u}^t \right\Vert^2 \right] + \nonumber\\
     &2 \beta^2 \sum_{b=0}^{B-1} \alpha_b \sum_{u \in \mathcal{U}_b} \alpha_u ~ \mathbb{E} \left[ \left\Vert \bar{\mathbf{w}}^t - \bar{\mathbf{w}}_b^t \right\Vert^2 \right]
 \end{align}

 Now, if $\eta \leq \frac{1}{2\sqrt{5} \beta \kappa_1\kappa_0}$, using Lemma \ref{lemma1App} and Lemma \ref{lemma2App}, we write
 \begin{align}
     \mathbb{E} \left[\left\Vert \nabla f (\bar{\mathbf{w}}^t) \right\Vert^2 \right] \leq
     & \frac{2 \left(\mathbb{E} \left[f \left( \bar{\mathbf{w}}^t \right) \right] - \mathbb{E} \left[f \left( \bar{\mathbf{w}}^{t+1} \right) \right]\right)}{\eta} + \beta \eta \sigma^2 \sum_{b=0}^{B-1} \alpha_b^2 \sum_{u \in \mathcal{U}_b} \alpha_u^2 +  4\beta^2 \eta^2 \kappa_0^2 \sigma^2 + 12 \beta^2 \epsilon_0^2 \eta^2 \kappa_0^2 - 4 \beta^2 \eta^2 \kappa_0^2 \sigma^2 \sum_{b=0}^{B-1} \alpha_b \sum_{u \in \mathcal{U}_b} \alpha_{u}^2 + \nonumber\\
     &20 \beta^2 \epsilon_1^2 \eta^2 \kappa_1^2 \kappa_0^2 + 80 \kappa_1^2 \sigma^2 \beta^4 \eta^4 \kappa_0^4 + 240 \epsilon_0^2 \kappa_1^2 \beta^4 \eta^4 \kappa_0^4 + 4 \kappa_1 \kappa_0 \beta^2 \eta^2 \sigma^2 \sum_{b=0}^{B-1} \alpha_b \sum_{u \in \mathcal{U}_b} \alpha_{u}^2 - \nonumber\\
     &4 \kappa_1 \kappa_0 \beta^2 \eta^2 \sigma^2 \sum_{b=0}^{B-1} \alpha_{b}^2 \sum_{u\in \mathcal{U}_{b}} \alpha_u^2 - 80 \kappa_1^2 \sigma^2 \beta^4 \eta^4 \kappa_0^4 \sum_{b=0}^{B-1} \alpha_b \sum_{u \in \mathcal{U}_b} \alpha_{u}^2 \nonumber\\
     &=\frac{2 \left(\mathbb{E} \left[f \left( \bar{\mathbf{w}}^t \right) \right] - \mathbb{E} \left[f \left( \bar{\mathbf{w}}^{t+1} \right) \right]\right)}{\eta} + \beta \eta \sigma^2 \sum_{b=0}^{B-1} \alpha_b^2 \sum_{u \in \mathcal{U}_b} \alpha_u^2 +  4\beta^2 \eta^2 \kappa_0^2 \sigma^2 - 4 \beta^2 \eta^2 \kappa_0^2 \sigma^2 \sum_{b=0}^{B-1} \alpha_b \sum_{u \in \mathcal{U}_b} \alpha_{u}^2 + \nonumber\\
     &80 \kappa_1^2 \sigma^2 \beta^4 \eta^4 \kappa_0^4 + 4 \kappa_1 \kappa_0 \beta^2 \eta^2 \sigma^2 \sum_{b=0}^{B-1} \alpha_b \sum_{u \in \mathcal{U}_b} \alpha_{u}^2 - 4 \kappa_1 \kappa_0 \beta^2 \eta^2 \sigma^2 \sum_{b=0}^{B-1} \alpha_{b}^2 \sum_{u\in \mathcal{U}_{b}} \alpha_u^2 - 80 \kappa_1^2 \sigma^2 \beta^4 \eta^4 \kappa_0^4 \sum_{b=0}^{B-1} \alpha_b \sum_{u \in \mathcal{U}_b} \alpha_{u}^2 \nonumber\\
     &12 \beta^2 \epsilon_0^2 \eta^2 \kappa_0^2 +20 \beta^2 \epsilon_1^2 \eta^2 \kappa_1^2 \kappa_0^2 + 240 \epsilon_0^2 \kappa_1^2 \beta^4 \eta^4 \kappa_0^4 \nonumber\\
     &=\frac{2 \left(\mathbb{E} \left[f \left( \bar{\mathbf{w}}^t \right) \right] - \mathbb{E} \left[f \left( \bar{\mathbf{w}}^{t+1} \right) \right]\right)}{\eta} + \beta \eta \sigma^2 \sum_{b=0}^{B-1} \alpha_b^2 \sum_{u \in \mathcal{U}_b} \alpha_u^2 + \Gamma_0\sigma^2 + \Gamma_1\sigma^2 + 12 \beta^2 \epsilon_0^2 \eta^2 \kappa_0^2 + \nonumber \\
     &20 \beta^2 \epsilon_1^2 \eta^2 \kappa_1^2 \kappa_0^2 + 240 \epsilon_0^2 \kappa_1^2 \beta^4 \eta^4 \kappa_0^4,
 \end{align}
 where $\Gamma_0 \coloneqq  4\beta^2 \eta^2 \kappa_0^2 - 4 \beta^2 \eta^2 \kappa_0^2  \sum_{b=0}^{B-1} \alpha_b \sum_{u \in \mathcal{U}_b} \alpha_{u}^2 + 80 \kappa_1^2 \beta^4 \eta^4 \kappa_0^4$, 
 $\Gamma_1 \coloneqq 4 \kappa_1 \kappa_0 \beta^2 \eta^2 \sum_{b=0}^{B-1} \alpha_b \sum_{u \in \mathcal{U}_b} \alpha_{u}^2 - 4 \kappa_1 \kappa_0 \beta^2 \eta^2 \sum_{b=0}^{B-1} \alpha_{b}^2 \sum_{u\in \mathcal{U}_{b}} \alpha_u^2 - 80 \kappa_1^2 \beta^4 \eta^4 \kappa_0^4 \sum_{b=0}^{B-1} \alpha_b \sum_{u \in \mathcal{U}_b} \alpha_{u}^2$.

 To that end, we get the following by averaging over time
 \begin{align}
 \label{convBound_Interm_1}
     \frac{1}{T} \sum_{t=0}^{T-1} \mathbb{E} \left[\left\Vert \nabla f (\bar{\mathbf{w}}^t) \right\Vert^2 \right] 
     &\leq \frac{2 \left(\mathbb{E} \left[f \left( \bar{\mathbf{w}}^0 \right) \right] - \mathbb{E} \left[f \left( \bar{\mathbf{w}}^{T} \right) \right]\right)}{\eta T} + \beta \eta \sigma^2 \sum_{b=0}^{B-1} \alpha_b^2 \sum_{u \in \mathcal{U}_b} \alpha_u^2 + \Gamma_0\sigma^2 + \Gamma_1\sigma^2 + 12 \beta^2 \epsilon_0^2 \eta^2 \kappa_0^2 + \nonumber \\
     &\qquad 20 \beta^2 \epsilon_1^2 \eta^2 \kappa_1^2 \kappa_0^2 + 240 \epsilon_0^2 \kappa_1^2 \beta^4 \eta^4 \kappa_0^4.
 \end{align}
 \end{proof}

 \begin{Lemma}
 \label{lemma1App}
 For the first term of (\ref{convBound_Interm_0}), we have
 \begin{align}
     &2 \sum_{b=0}^{B-1} \alpha_b \sum_{u \in \mathcal{U}_b} \alpha_u ~ \mathbb{E} \left[ \left\Vert \bar{\mathbf{w}}_b^t - \mathbf{w}_{u}^t \right\Vert^2 \right] \nonumber\\
     &\qquad \qquad \leq 4 \eta^2 \kappa_0^2 \sigma^2 + 12 \epsilon_0^2 \eta^2 \kappa_0^2 - 4 \eta^2 \kappa_0^2 \sigma^2 \sum_{b=0}^{B-1} \alpha_b \sum_{u \in \mathcal{U}_b} \alpha_{u}^2 + 24 \beta^2 \eta^2 \kappa_0^2 \sum_{b=0}^{B-1} \alpha_b \sum_{u \in \mathcal{U}_b} \alpha_u \mathbb{E} \left[ \left\Vert \mathbf{w}_{u}^t - \bar{\mathbf{w}}_{b}^t \right\Vert^2 \right] .
 \end{align}
 Besides, if $\eta \leq \frac{1}{2\sqrt{3} \beta \kappa_0}$, we have
 \begin{align}
     &\sum_{b=0}^{B-1} \alpha_b \sum_{u \in \mathcal{U}_b} \alpha_u ~ \mathbb{E} \left[ \left\Vert \bar{\mathbf{w}}_b^t - \mathbf{w}_{u}^t \right\Vert^2 \right] \leq 2 \eta^2 \kappa_0^2 \sigma^2 + 6 \epsilon_0^2 \eta^2 \kappa_0^2 - 2 \eta^2 \kappa_0^2 \sigma^2 \sum_{b=0}^{B-1} \alpha_b \sum_{u \in \mathcal{U}_b} \alpha_{u}^2.
 \end{align}
 \end{Lemma}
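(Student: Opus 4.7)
The plan is to bound the intra-edge-round drift $D_b^t := \sum_{u\in\mathcal{U}_b}\alpha_u \mathbb{E}[\|\mathbf{w}_u^t - \bar{\mathbf{w}}_b^t\|^2]$ by unrolling the local SGD recursion back to the most recent edge synchronization time $t_s := t_2\kappa_1\kappa_0 + t_1\kappa_0$, at which the synchronization step gives $\mathbf{w}_u^{t_s} = \bar{\mathbf{w}}_b^{t_s}$. Since $t - t_s \leq \kappa_0$, combining the local update with the virtual-edge aggregation yields
\[
\mathbf{w}_u^t - \bar{\mathbf{w}}_b^t = -\eta\sum_{\tau=t_s}^{t-1}\Big(g_u(\mathbf{w}_u^\tau) - \sum_{u'\in\mathcal{U}_b}\alpha_{u'} g_{u'}(\mathbf{w}_{u'}^\tau)\Big),
\]
and Cauchy--Schwarz over the at-most-$\kappa_0$ summands produces a factor $\eta^2 \kappa_0$ outside a sum of squared norms (the second $\kappa_0$ showing up in the final bound comes from the subsequent summation over $\tau$).

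For each $\tau$ I would add and subtract $\nabla f_u(\mathbf{w}_u^\tau)$, $\nabla f_u(\bar{\mathbf{w}}_b^\tau)$, and $\nabla f_b(\bar{\mathbf{w}}_b^\tau)$ to split the summand into three pieces: (i) a zero-mean stochastic-noise term controlled by Assumption~2; (ii) a smoothness term $\nabla f_u(\mathbf{w}_u^\tau) - \nabla f_u(\bar{\mathbf{w}}_b^\tau)$ bounded by $\beta\|\mathbf{w}_u^\tau - \bar{\mathbf{w}}_b^\tau\|$ via Assumption~1; and (iii) a divergence term $\nabla f_u(\bar{\mathbf{w}}_b^\tau) - \nabla f_b(\bar{\mathbf{w}}_b^\tau)$ whose $\alpha_u$-weighted expected squared norm is bounded by $\epsilon_0^2$ via Assumption~3. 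The repeated use of $\|\sum_{i=1}^I a_i\|^2 \leq I\sum_i\|a_i\|^2$ (with $I=3$ for this split and $I=2$ in subsidiary splits) accounts for the numerical constants $4$, $12$, and $24$ in the first inequality.

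Weighting by $\alpha_u$ and summing over $u\in\mathcal{U}_b$, the independence of stochastic noise across clients makes most cross-correlations vanish and leaves the combination $\sigma^2(1 - \sum_u\alpha_u^2)$ per step --- this is the source of the $-4\eta^2\kappa_0^2\sigma^2\sum_b\alpha_b\sum_u\alpha_u^2$ offset. The divergence piece gives the $12\epsilon_0^2\eta^2\kappa_0^2$ term, while the smoothness piece leaves the recursive factor $24\beta^2\eta^2\kappa_0^2 \sum_b\alpha_b D_b^t$; this establishes the first stated inequality. Under $\eta \leq 1/(2\sqrt{3}\,\beta\kappa_0)$ one has $24\beta^2\eta^2\kappa_0^2 \leq 2$, so the recursive term can be subtracted from the LHS (whose coefficient is $2$); dividing through by the resulting positive coefficient yields the second claim with the halved constants $2$ and $6$.

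The main obstacle is the careful bookkeeping of the stochastic-noise cross terms: after expanding $\|g_u(\mathbf{w}_u^\tau) - \sum_{u'}\alpha_{u'}g_{u'}(\mathbf{w}_{u'}^\tau)\|^2$ and separating mean from variance, the surviving self- and cross-correlations must be reweighted by $\alpha_u$ and re-summed to produce the exact $(1 - \sum_u\alpha_u^2)$ factor rather than a cruder $(1 + \sum_u\alpha_u^2)$-type bound. A secondary subtlety is that the recursive drift on the right-hand side should strictly involve $D_b^\tau$ for $\tau \in [t_s, t-1]$ rather than $D_b^t$; replacing it with $D_b^t$ is legitimate because within a single edge round the drift is non-decreasing in $\tau$ (it is zero at $\tau = t_s$ and accumulates monotonically), which is already implicit in the Cauchy--Schwarz-based summation.
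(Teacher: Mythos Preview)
Your plan tracks the paper's argument closely: unroll to the last edge synchronization, separate stochastic noise from its mean, use the weighted-variance identity $\sum_i\alpha_i\|a_i-\bar a\|^2=\sum_i\alpha_i\|a_i\|^2-\|\bar a\|^2$ to produce the $(1-\sum_u\alpha_u^2)$ factor, and control the mean piece via smoothness plus Assumption~3. The paper differs only in ordering: it first makes a $2$-split isolating the zero-mean noise so that independence across $\tau$ can be exploited directly, and applies Cauchy--Schwarz over $\tau$ and a subsequent $3$-split only to the mean piece; that ordering is what delivers the specific constants $4$, $12$, $24$.

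Two steps in your outline do not go through as written. First, the drift $\tau\mapsto\sum_u\alpha_u\,\mathbb{E}\|\mathbf{w}_u^\tau-\bar{\mathbf{w}}_b^\tau\|^2$ is \emph{not} monotone within an edge round in general (whenever the local gradients happen to realign with the edge average the drift shrinks), so you cannot justify replacing $D_b^\tau$ by $D_b^t$ that way. The paper performs the same replacement without comment; the honest repair is to bound every $D_b^\tau$ by $\max_{\tau'}D_b^{\tau'}$ and then take the maximum of the resulting inequality over $t$ so that both sides carry the same quantity before rearranging. Second, your last sentence is incorrect: under $\eta\le 1/(2\sqrt3\,\beta\kappa_0)$ the left-hand coefficient $2-24\beta^2\eta^2\kappa_0^2$ lies in $[0,2]$, and dividing by a number in $(0,2]$ makes the right-hand constants \emph{at least} $2$ and $6$, not exactly $2$ and $6$. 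The paper commits the same slip (it asserts $0<1-12\beta^2\eta^2\kappa_0^2<1$ and then simply drops the factor), so the second display of the lemma does not literally follow from the first with the stated constants; one must either tighten the step-size condition to bound the coefficient away from zero and enlarge the constants accordingly, or carry the factor $(1-12\beta^2\eta^2\kappa_0^2)^{-1}$ forward.
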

 \begin{proof}
 Suppose $\bar{t}_0 \coloneqq t_2\kappa_1\kappa_0 + t_1\kappa_0$ and $t = \bar{t}_0 + t_0$, where $1 \leq t_0 \leq (\kappa_0 - 1)$. 
 Then we have 
 \begin{align}
 \label{lemma1MainEqn}
     &2 \sum_{b=0}^{B-1} \alpha_b \sum_{u \in \mathcal{U}_b} \alpha_u ~ \mathbb{E} \left[ \left\Vert \bar{\mathbf{w}}_b^t - \mathbf{w}_{u}^t \right\Vert^2 \right] \nonumber\\
     &= 2 \sum_{b=0}^{B-1} \alpha_b \sum_{u \in \mathcal{U}_b} \alpha_u ~ \mathbb{E} \left[ \left\Vert \mathbf{w}_b^{t_0} - \eta\sum_{u' \in \mathcal{U}_b} \alpha_{u'} \sum_{\tau=\bar{t}_0}^{t-1} g_{u'} (\mathbf{w}_{u'}^\tau) - \mathbf{w}_{u}^{\bar{t}_0} + \eta \sum_{\tau=\bar{t}_0}^{t-1} g_u (\mathbf{w}_u^{\tau}) \right\Vert^2 \right] \nonumber\\
     &\overset{(a)}{=} 2 \eta^2 \sum_{b=0}^{B-1} \alpha_b \sum_{u \in \mathcal{U}_b} \alpha_u ~ \mathbb{E} \left[ \left\Vert \sum_{\tau=\bar{t}_0}^{t-1} \left[\sum_{u' \in \mathcal{U}_b} \alpha_{u'}  g_{u'} (\mathbf{w}_{u'}^\tau) - g_u (\mathbf{w}_u^{\tau}) \right]\right\Vert^2 \right] \nonumber\\
     &= 2 \eta^2 \sum_{b=0}^{B-1} \alpha_b \sum_{u \in \mathcal{U}_b} \alpha_u ~ \mathbb{E} \Bigg[ \bigg\Vert \sum_{\tau=\bar{t}_0}^{t-1} \bigg[ \sum_{u' \in \mathcal{U}_b} \alpha_{u'} g_{u'} (\mathbf{w}_{u'}^\tau) - \sum_{u' \in \mathcal{U}_b} \alpha_{u'} \nabla f_{u'} (\mathbf{w}_{u'}^\tau) + \sum_{u' \in \mathcal{U}_b} \alpha_{u'} \nabla f_{u'} (\mathbf{w}_{u'}^\tau) - \nabla f_{u} (\mathbf{w}_{u}^\tau) + \nabla f_{u} (\mathbf{w}_{u}^\tau) - g_u (\mathbf{w}_u^{\tau}) \bigg\Vert^2 \Bigg] \nonumber \\
     &\overset{(b)}{\leq} 4 \eta^2 \sum_{b=0}^{B-1} \alpha_b \sum_{u \in \mathcal{U}_b} \alpha_u ~ \mathbb{E} \Bigg[ \bigg\Vert \sum_{\tau=\bar{t}_0}^{t-1} \bigg[ \left(\sum_{u' \in \mathcal{U}_b} \alpha_{u'} g_{u'} (\mathbf{w}_{u'}^\tau) - \sum_{u' \in \mathcal{U}_b} \alpha_{u'} \nabla f_{u'} (\mathbf{w}_{u'}^\tau) \right) + \left(\nabla f_{u} (\mathbf{w}_{u}^\tau) - g_u (\mathbf{w}_u^{\tau})\right) \bigg] \bigg\Vert^2 \Bigg] + \nonumber\\
     &\qquad 4 \eta^2 \sum_{b=0}^{B-1} \alpha_b \sum_{u \in \mathcal{U}_b} \alpha_u ~ \mathbb{E} \Bigg[ \bigg\Vert \sum_{\tau=\bar{t}_0}^{t-1} \left( \sum_{u' \in \mathcal{U}_b} \alpha_{u'} \nabla f_{u'} (\mathbf{w}_{u'}^\tau) - \nabla f_{u} (\mathbf{w}_{u}^\tau) \right) \bigg] \bigg\Vert^2 \Bigg] \nonumber \\
 \end{align}
 where $(a)$ stems from the fact that $\mathbf{w}_u^{\bar{t}_0} \gets \mathbf{w}_b^{\bar{t}_0}$ at $\bar{t}_0$ and the independence of the stochastic gradients.

 The first term is bounded  as
 \begin{align}
 \label{firstTermLemma1}
     &4 \eta^2 \sum_{b=0}^{B-1} \alpha_b \sum_{u \in \mathcal{U}_b} \alpha_u ~ \mathbb{E} \Bigg[ \bigg\Vert \sum_{\tau=\bar{t}_0}^{t-1} \bigg[ \left(\sum_{u' \in \mathcal{U}_b} \alpha_{u'} g_{u'} (\mathbf{w}_{u'}^\tau) - \sum_{u' \in \mathcal{U}_b} \alpha_{u'} \nabla f_{u'} (\mathbf{w}_{u'}^\tau) \right) + \left(\nabla f_{u} (\mathbf{w}_{u}^\tau) - g_u (\mathbf{w}_u^{\tau})\right) \bigg] \bigg\Vert^2 \Bigg] \nonumber\\
     &=4 \eta^2 \sum_{b=0}^{B-1} \alpha_b \sum_{u \in \mathcal{U}_b} \alpha_u ~ \mathbb{E} \Bigg[ \bigg\Vert \sum_{\tau=\bar{t}_0}^{t-1} \bigg[ \left(\nabla f_{u} (\mathbf{w}_{u}^\tau) - g_u (\mathbf{w}_u^{\tau})\right) - \sum_{u' \in \mathcal{U}_b} \alpha_{u'} \left( \nabla f_{u'} (\mathbf{w}_{u'}^\tau) - g_{u'} (\mathbf{w}_{u'}^\tau) \right) \bigg] \bigg\Vert^2 \Bigg] \nonumber\\
     &\overset{(a)}{=} 4 \eta^2 \sum_{b=0}^{B-1} \alpha_b \sum_{u \in \mathcal{U}_b} \alpha_u ~ \mathbb{E} \left[ \left\Vert \sum_{\tau=\bar{t}_0}^{t-1} \left(\nabla f_{u} (\mathbf{w}_{u}^\tau) - g_u (\mathbf{w}_u^{\tau})\right) \right\Vert^2 \right] - 4 \eta^2 \sum_{b=0}^{B-1} \alpha_b ~ \mathbb{E} \Bigg[ \bigg\Vert \sum_{\tau=\bar{t}_0}^{t-1} \sum_{u' \in \mathcal{U}_b} \alpha_{u'} \left( \nabla f_{u'} (\mathbf{w}_{u'}^\tau) - g_{u'} (\mathbf{w}_{u'}^\tau) \right) \bigg\Vert^2 \Bigg] \nonumber\\
     &\overset{(b)}{=} 4 \eta^2 \sum_{b=0}^{B-1} \alpha_b \sum_{u \in \mathcal{U}_b} \alpha_u ~ \sum_{\tau=\bar{t}_0}^{t-1} \mathbb{E} \left[ \left\Vert \nabla f_{u} (\mathbf{w}_{u}^\tau) - g_u (\mathbf{w}_u^{\tau}) \right\Vert^2 \right] - 4 \eta^2 \sum_{b=0}^{B-1} \alpha_b ~ \sum_{\tau=\bar{t}_0}^{t-1} \sum_{u \in \mathcal{U}_b} \alpha_{u}^2 \mathbb{E} \left[ \left\Vert  \left( \nabla f_{u} (\mathbf{w}_{u}^\tau) - g_{u} (\mathbf{w}_{u}^\tau) \right) \right\Vert^2 \right] \nonumber\\
     &\overset{(c)}{\leq} 4 \eta^2 \kappa_0^2 \sum_{b=0}^{B-1} \alpha_b \sum_{u \in \mathcal{U}_b} \alpha_u \cdot \sigma^2 - 4 \eta^2 \kappa_0^2 \sum_{b=0}^{B-1} \alpha_b \sum_{u \in \mathcal{U}_b} \alpha_{u}^2 \cdot \sigma^2 \nonumber\\
     &=4 \eta^2 \kappa_0^2 \sigma^2 - 4 \eta^2 \kappa_0^2 \sigma^2 \sum_{b=0}^{B-1} \alpha_b \sum_{u \in \mathcal{U}_b} \alpha_{u}^2,
 \end{align}
 where $(a)$ stems from the fact that $\sum_{i=0}^{I-1} \alpha_i \Vert \mathbf{a}_i - \bar{\mathbf{a}} \Vert^2 = \sum_{i=0}^{I-1} \alpha_i \Vert \mathbf{a}_i \Vert^2 - \Vert \bar{\mathbf{a}} \Vert^2$ if $\bar{\mathbf{a}} =\sum_{i=0}^{I-1} \alpha_i \mathbf{a}_i$, $0\leq \alpha_i \leq 1$ and $\sum_{i=0}^{I-1} \alpha_i=1$.
 In $(b)$, we use the independence of the stochastic gradients.
 In $(c)$, we use the fact that $[(t-1)-\bar{t}_0] \leq \kappa_0$ and Assumption \ref{AsumpUnbiased}.

 Now, we bound the second term as
 \begin{align}
 \label{secondTermLemma1}
     &4 \eta^2 \sum_{b=0}^{B-1} \alpha_b \sum_{u \in \mathcal{U}_b} \alpha_u ~ \mathbb{E} \Bigg[ \bigg\Vert \sum_{\tau=\bar{t}_0}^{t-1} \left( \sum_{u' \in \mathcal{U}_b} \alpha_{u'} \nabla f_{u'} (\mathbf{w}_{u'}^\tau) - \nabla f_{u} (\mathbf{w}_{u}^\tau) \right) \bigg] \bigg\Vert^2 \Bigg] \nonumber \\
     &\overset{(a)}{\leq} 4 \kappa_0 \eta^2 \sum_{b=0}^{B-1} \alpha_b \sum_{u \in \mathcal{U}_b} \alpha_u ~ \sum_{\tau=\bar{t}_0}^{t-1} \mathbb{E} \left[ \left\Vert \sum_{u' \in \mathcal{U}_b} \alpha_{u'} \nabla f_{u'} (\mathbf{w}_{u'}^\tau) - \nabla f_{u} (\mathbf{w}_{u}^\tau) \right\Vert^2 \right] \nonumber \\
     &\overset{(b)}{\leq} 4 \eta^2 \kappa_0^2 \sum_{b=0}^{B-1} \alpha_b \sum_{u \in \mathcal{U}_b} \alpha_u ~ \mathbb{E} \left[ \left\Vert \sum_{u' \in \mathcal{U}_b} \alpha_{u'} \nabla f_{u'} (\mathbf{w}_{u'}^t) - \sum_{u' \in \mathcal{U}_b} \alpha_{u'} \nabla f_{u'} (\bar{\mathbf{w}}_{b}^t) + \sum_{u' \in \mathcal{U}_b} \alpha_{u'} \nabla f_{u'} (\bar{\mathbf{w}}_{b}^t) - \nabla f_{u} (\bar{\mathbf{w}}_{b}^t) + \nabla f_{u} (\bar{\mathbf{w}}_{b}^t) - \nabla f_{u} (\mathbf{w}_{u}^t) \right\Vert^2 \right] \nonumber \\
     &\overset{(c)}{\leq} 12 \eta^2 \kappa_0^2 \sum_{b=0}^{B-1} \alpha_b \sum_{u \in \mathcal{U}_b} \alpha_u ~ \mathbb{E} \left[ \left\Vert \sum_{u' \in \mathcal{U}_b} \alpha_{u'} \nabla f_{u'} (\mathbf{w}_{u'}^t) - \sum_{u' \in \mathcal{U}_b} \alpha_{u'} \nabla f_{u'} (\bar{\mathbf{w}}_{b}^t) \right\Vert^2 \right] + \nonumber\\
     &\qquad 12 \eta^2 \kappa_0^2 \sum_{b=0}^{B-1} \alpha_b \sum_{u \in \mathcal{U}_b} \alpha_u ~ \mathbb{E} \left[ \left\Vert \sum_{u' \in \mathcal{U}_b} \alpha_{u'} \nabla f_{u'} (\bar{\mathbf{w}}_{b}^t) - \nabla f_{u} (\bar{\mathbf{w}}_{b}^t) \right\Vert^2\right] + 12 \eta^2 \kappa_0^2 \sum_{b=0}^{B-1} \alpha_b \sum_{u \in \mathcal{U}_b} \alpha_u ~ \mathbb{E} \left[ \left\Vert \nabla f_{u} (\bar{\mathbf{w}}_{b}^t) - \nabla f_{u} (\mathbf{w}_{u}^t) \right\Vert^2 \right] \nonumber \\
     &\overset{(d)}{\leq} 12 \eta^2 \kappa_0^2 \sum_{b=0}^{B-1} \alpha_b \sum_{u \in \mathcal{U}_b} \alpha_u ~ \mathbb{E} \left[ \left\Vert \nabla f_{u} (\mathbf{w}_{u}^t) - \nabla f_{u} (\bar{\mathbf{w}}_{b}^t) \right\Vert^2 \right] + 12 \eta^2 \kappa_0^2 \sum_{b=0}^{B-1} \alpha_b \sum_{u \in \mathcal{U}_b} \alpha_u ~ \mathbb{E} \left[ \left\Vert \nabla f_{b} (\bar{\mathbf{w}}_{b}^t) - \nabla f_{u} (\bar{\mathbf{w}}_{b}^t) \right\Vert^2\right] + \nonumber\\
     &\qquad 12 \eta^2 \kappa_0^2 \sum_{b=0}^{B-1} \alpha_b \sum_{u \in \mathcal{U}_b} \alpha_u ~ \mathbb{E} \left[ \left\Vert \nabla f_{u} (\bar{\mathbf{w}}_{b}^t) - \nabla f_{u} (\mathbf{w}_{u}^t) \right\Vert^2 \right] \nonumber \\
     &\overset{(e)}{\leq} 12 \eta^2 \kappa_0^2 \sum_{b=0}^{B-1} \alpha_b \sum_{u \in \mathcal{U}_b} \alpha_u ~ \beta^2 \mathbb{E} \left[ \left\Vert \mathbf{w}_{u}^t - \bar{\mathbf{w}}_{b}^t \right\Vert^2 \right] + 12 \eta^2 \kappa_0^2 \sum_{b=0}^{B-1} \alpha_b \sum_{u \in \mathcal{U}_b} \alpha_u \cdot \epsilon_0^2 + 12 \eta^2 \kappa_0^2 \sum_{b=0}^{B-1} \alpha_b \sum_{u \in \mathcal{U}_b} \alpha_u ~ \beta^2 \mathbb{E} \left[ \left\Vert \bar{\mathbf{w}}_{b}^t - \mathbf{w}_{u}^t \right\Vert^2 \right] \nonumber \\
     &= 12 \epsilon_0^2 \eta^2 \kappa_0^2 + 24 \beta^2 \eta^2 \kappa_0^2 \sum_{b=0}^{B-1} \alpha_b \sum_{u \in \mathcal{U}_b} \alpha_u \mathbb{E} \left[ \left\Vert \mathbf{w}_{u}^t - \bar{\mathbf{w}}_{b}^t \right\Vert^2 \right] 
 \end{align}
 where we use the fact that $[(t-1)-\bar{t}_0] \leq \kappa_0$ in $(a)$ and $(b)$.
 $(c)$ stems from $\Vert \sum_{i=1}^I \mathbf{a}_i \Vert^2 \leq I  \sum_{i=1}^I \Vert \mathbf{a}_i \Vert^2$.
 In $(d)$, we use the fact that $\sum_{u\in\mathcal{U}_b} \alpha_u f_u (\mathbf{w}) = f_b(\mathbf{w})$ and Jensen inequality. 
 Besides, $(e)$ is true due to Assumptions \ref{AsumpSmoothNess} and \ref{AsumpBoundedDivergence}.

 Now, plugging (\ref{firstTermLemma1}) and (\ref{secondTermLemma1}) into (\ref{lemma1MainEqn}), we get
 \begin{align}
     &2 \sum_{b=0}^{B-1} \alpha_b \sum_{u \in \mathcal{U}_b} \alpha_u ~ \mathbb{E} \left[ \left\Vert \bar{\mathbf{w}}_b^t - \mathbf{w}_{u}^t \right\Vert^2 \right] \nonumber\\
     &\leq 4 \eta^2 \kappa_0^2 \sigma^2 + 12 \epsilon_0^2 \eta^2 \kappa_0^2 - 4 \eta^2 \kappa_0^2 \sigma^2 \sum_{b=0}^{B-1} \alpha_b \sum_{u \in \mathcal{U}_b} \alpha_{u}^2 + 24 \beta^2 \eta^2 \kappa_0^2 \sum_{b=0}^{B-1} \alpha_b \sum_{u \in \mathcal{U}_b} \alpha_u \mathbb{E} \left[ \left\Vert \mathbf{w}_{u}^t - \bar{\mathbf{w}}_{b}^t \right\Vert^2 \right] .
 \end{align}
 Rearranging the terms and dividing both sides by $2$, we have
 \begin{align}
     &(1 - 12 \beta^2 \eta^2 \kappa_0^2 ) \sum_{b=0}^{B-1} \alpha_b \sum_{u \in \mathcal{U}_b} \alpha_u ~ \mathbb{E} \left[ \left\Vert \bar{\mathbf{w}}_b^t - \mathbf{w}_{u}^t \right\Vert^2 \right] \leq 2 \eta^2 \kappa_0^2 \sigma^2 + 6 \epsilon_0^2 \eta^2 \kappa_0^2 - 2 \eta^2 \kappa_0^2 \sigma^2 \sum_{b=0}^{B-1} \alpha_b \sum_{u \in \mathcal{U}_b} \alpha_{u}^2 . 
 \end{align}
 If $\eta \leq \frac{1}{2\sqrt{3} \beta \kappa_0}$, we have $0 < (1 - 12\beta^2\eta^2\kappa_0^2) <1$. Thus, we write
 \begin{align}
     &\sum_{b=0}^{B-1} \alpha_b \sum_{u \in \mathcal{U}_b} \alpha_u ~ \mathbb{E} \left[ \left\Vert \bar{\mathbf{w}}_b^t - \mathbf{w}_{u}^t \right\Vert^2 \right] \leq 2 \eta^2 \kappa_0^2 \sigma^2 + 6 \epsilon_0^2 \eta^2 \kappa_0^2 - 2 \eta^2 \kappa_0^2 \sigma^2 \sum_{b=0}^{B-1} \alpha_b \sum_{u \in \mathcal{U}_b} \alpha_{u}^2.
 \end{align}

 \end{proof}

 \begin{Lemma}
 \label{lemma2App}
 For the second term of (\ref{convExpectationTerm_1}), we have
 \begin{align}
     &2 \sum_{b=0}^{B-1} \alpha_b ~ \mathbb{E} \left[ \left\Vert \bar{\mathbf{w}}^t - \bar{\mathbf{w}}_b^t \right\Vert^2 \right] 
     \leq 20 \epsilon_1^2 \eta^2 \kappa_1^2 \kappa_0^2 + 80 \beta^2 \kappa_1^2 \sigma^2 \eta^4 \kappa_0^4 + 240 \beta^2 \epsilon_0^2 \kappa_1^2 \eta^4 \kappa_0^4 + 4 \kappa_1 \kappa_0 \eta^2 \sigma^2 \sum_{b=0}^{B-1} \alpha_b \sum_{u \in \mathcal{U}_b} \alpha_{u}^2 - \nonumber\\
     &\qquad\qquad 4 \kappa_1 \kappa_0 \eta^2 \sigma^2 \sum_{b=0}^{B-1} \alpha_{b}^2 \sum_{u\in \mathcal{U}_{b}} \alpha_u^2 - 80 \beta^2 \kappa_1^2 \sigma^2 \eta^4 \kappa_0^4 \sum_{b=0}^{B-1} \alpha_b \sum_{u \in \mathcal{U}_b} \alpha_{u}^2 + 40 \beta^2 \eta^2 \kappa_1^2\kappa_0^2 \sum_{b=0}^{B-1} \alpha_b \mathbb{E} \left[ \left\Vert \bar{\mathbf{w}}^{t} - \bar{\mathbf{w}}_b^{t} \right \Vert^2 \right].
 \end{align}
 Besides, if $\eta \leq \frac{1}{2\sqrt{5} \beta \kappa_1\kappa_0}$, we have 
 \begin{align}
     \sum_{b=0}^{B-1} \alpha_b ~ \mathbb{E} \left[ \left\Vert \bar{\mathbf{w}}^t - \bar{\mathbf{w}}_b^t \right\Vert^2 \right] 
     \leq& 10 \epsilon_1^2 \eta^2 \kappa_1^2 \kappa_0^2 + 40 \beta^2 \kappa_1^2 \sigma^2 \eta^4 \kappa_0^4 + 120 \beta^2 \epsilon_0^2 \kappa_1^2 \eta^4 \kappa_0^4 + 2 \kappa_1 \kappa_0 \eta^2 \sigma^2 \sum_{b=0}^{B-1} \alpha_b \sum_{u \in \mathcal{U}_b} \alpha_{u}^2 - \nonumber\\
     &2 \kappa_1 \kappa_0 \eta^2 \sigma^2 \sum_{b=0}^{B-1} \alpha_{b}^2 \sum_{u\in \mathcal{U}_{b}} \alpha_u^2 - 40 \beta^2 \kappa_1^2 \sigma^2 \eta^4 \kappa_0^4 \sum_{b=0}^{B-1} \alpha_b \sum_{u \in \mathcal{U}_b} \alpha_{u}^2.
 \end{align}
 \end{Lemma}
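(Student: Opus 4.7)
The plan is to mirror the strategy of Lemma \ref{lemma1App} but one level up, tracking the drift of the edge virtual model $\bar{\mathbf{w}}_b^t$ from the global virtual model $\bar{\mathbf{w}}^t$ over at most $\kappa_1\kappa_0$ \ac{sgd} steps since the most recent global synchronization. Let $\bar{t}_{\mathrm{g}} \coloneqq t_2\kappa_1\kappa_0$ so that $\bar{\mathbf{w}}_b^{\bar{t}_{\mathrm{g}}} = \bar{\mathbf{w}}^{\bar{t}_{\mathrm{g}}}$, and unroll the two virtual update rules used in the main theorem to get
\[ \bar{\mathbf{w}}^t - \bar{\mathbf{w}}_b^t = -\eta \sum_{\tau=\bar{t}_{\mathrm{g}}}^{t-1} \Big[ \sum_{b'=0}^{B-1}\alpha_{b'}\sum_{u'\in\mathcal{U}_{b'}}\alpha_{u'} g_{u'}(\mathbf{w}_{u'}^\tau) - \sum_{u\in\mathcal{U}_b}\alpha_u g_u(\mathbf{w}_u^\tau) \Big]. \]
I would then insert $\pm\nabla f_{u'}(\mathbf{w}_{u'}^\tau)$ and $\pm\nabla f_u(\mathbf{w}_u^\tau)$ inside the bracket, splitting the squared norm into a zero-mean stochastic part and a deterministic bias part. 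Using independence of stochastic gradients across $\tau$ together with the identity $\sum_i \alpha_i \|a_i - \bar a\|^2 = \sum_i \alpha_i \|a_i\|^2 - \|\bar a\|^2$ (exactly as in Lemma \ref{lemma1App}) and Assumption \ref{AsumpUnbiased}, the $\alpha_b$-weighted stochastic part contributes precisely the two variance terms $4\kappa_1\kappa_0\eta^2\sigma^2 \sum_b \alpha_b \sum_u \alpha_u^2 - 4\kappa_1\kappa_0\eta^2\sigma^2 \sum_b \alpha_b^2 \sum_u \alpha_u^2$.

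For the deterministic bias part I would apply $\|\sum_\tau x_\tau\|^2 \le \kappa_1\kappa_0 \sum_\tau \|x_\tau\|^2$ to bring the time sum outside, and then decompose the expected-gradient difference by inserting $\pm\sum_{b'}\alpha_{b'}\nabla f_{b'}(\bar{\mathbf{w}}_{b'}^\tau)$, $\pm\nabla f(\bar{\mathbf{w}}^\tau)$, and $\pm\nabla f_b(\bar{\mathbf{w}}_b^\tau)$. This telescope produces four groups of terms: (i) an \ac{es}--\ac{cs} divergence piece bounded by Assumption \ref{AsumpBoundedDivergence}, yielding $20\epsilon_1^2\eta^2\kappa_1^2\kappa_0^2$; (ii) a local-vs-edge drift $\sum_u\alpha_u\|\mathbf{w}_u^\tau - \bar{\mathbf{w}}_b^\tau\|^2$ to which Lemma \ref{lemma1App} applies directly, producing after multiplication by the outer $\beta^2\eta^2\kappa_1^2\kappa_0^2$ prefactor the quartic-in-$\eta$ contributions $80\beta^2\kappa_1^2\sigma^2\eta^4\kappa_0^4$ and $240\beta^2\epsilon_0^2\kappa_1^2\eta^4\kappa_0^4$, together with the matching $-80\kappa_1^2\beta^4\eta^4\kappa_0^4\sigma^2\sum_b\alpha_b\sum_u\alpha_u^2$ correction; (iii) a within-edge divergence piece bounded by $\epsilon_0^2$ from Assumption \ref{AsumpBoundedDivergence}, absorbed into the $240$-constant; and (iv) a self-referential edge drift $\sum_{b'}\alpha_{b'}\|\bar{\mathbf{w}}^\tau - \bar{\mathbf{w}}_{b'}^\tau\|^2$ obtained after applying $\beta$-smoothness (Assumption \ref{AsumpSmoothNess}) to $\|\nabla f_{b'}(\bar{\mathbf{w}}^\tau) - \nabla f_{b'}(\bar{\mathbf{w}}_{b'}^\tau)\|^2$. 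Upper-bounding this last piece uniformly by its value at the current step $t$ (the same device used at the end of Lemma \ref{lemma1App}) generates exactly the tail $40\beta^2\eta^2\kappa_1^2\kappa_0^2 \sum_b\alpha_b\mathbb{E}[\|\bar{\mathbf{w}}^t - \bar{\mathbf{w}}_b^t\|^2]$ that appears on the right-hand side of the first displayed inequality.

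To obtain the cleaner second inequality I would divide the first by two and move the self-referential term to the left-hand side, leaving the coefficient $(1 - 20\beta^2\eta^2\kappa_1^2\kappa_0^2)$. The learning-rate condition $\eta \le 1/(2\sqrt{5}\beta\kappa_1\kappa_0)$ is calibrated so that this factor is strictly positive (mirroring how the $1/(2\sqrt{3}\beta\kappa_0)$ bound enters Lemma \ref{lemma1App}); dividing by it and absorbing the resulting $\ge 1$ factor into the constants yields the stated closed-form upper bound. The main obstacle I anticipate is not any single analytical step but the bookkeeping of the weighted sums $\sum_b\alpha_b$ vs.\ $\sum_b\alpha_b^2$ (and analogously for $\alpha_u$) at each level of the telescope: the signs and pairings must line up so that the variance subtractions retain the structure $\sum_b\alpha_b\sum_u\alpha_u^2 - \sum_b\alpha_b^2\sum_u\alpha_u^2$ both at the quadratic-in-$\eta$ level and at the quartic-in-$\eta$ level (via the Lemma \ref{lemma1App} substitution), producing exactly the constants $4$, $80$, $240$, and $40$ stated in the lemma.
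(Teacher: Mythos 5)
Your proposal follows essentially the same route as the paper's proof: unroll both virtual recursions from the last global synchronization $\bar{t}_1=t_2\kappa_1\kappa_0$, split the squared norm into a zero-mean stochastic part (handled by independence, the weighted-variance identity, and Assumption \ref{AsumpUnbiased}, giving the two $4\kappa_1\kappa_0\eta^2\sigma^2(\cdot)$ terms) and a deterministic part (handled by $\Vert\sum_\tau x_\tau\Vert^2\le\kappa_1\kappa_0\sum_\tau\Vert x_\tau\Vert^2$, a telescope, smoothness, Assumption \ref{AsumpBoundedDivergence}, and Lemma \ref{lemma1App}), then move the self-referential edge-drift term to the left and invoke the step-size condition. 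The one bookkeeping discrepancy is in the telescope itself: the paper inserts four midpoints to get a five-term split (hence the prefactor $20=4\times 5$, with \emph{two} local-drift and \emph{two} edge-drift groups yielding the $40$'s), and no direct $\epsilon_0^2$ group appears at the $\eta^2\kappa_1^2\kappa_0^2$ order --- $\epsilon_0^2$ enters only through the Lemma \ref{lemma1App} substitution at order $\eta^4\kappa_0^4$ --- so your four-group decomposition with a separate ``within-edge divergence'' piece would not reproduce the stated constants exactly, though you correctly flag this as the delicate part and it does not affect the validity of the strategy.
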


 \begin{proof}
 Suppose $\bar{t}_1 \coloneqq t_2\kappa_1\kappa_0$ and $t = \bar{t}_1 + t_1\kappa_0 + t_0$, where $1 \leq t_1\kappa_0 + t_0 \leq (\kappa_1\kappa_0 - 1)$. 
 Then we have 
 \begin{align}
 \label{lemma2MainEqn}
     &2 \sum_{b=0}^{B-1} \alpha_b ~ \mathbb{E} \left[ \left\Vert \bar{\mathbf{w}}^t - \bar{\mathbf{w}}_b^t \right\Vert^2 \right] \nonumber\\
     &= 2 \sum_{b=0}^{B-1} \alpha_b ~ \mathbb{E} \left[ \left\Vert \mathbf{w}^{\bar{t}_1} - \eta \sum_{\tau=\bar{t}_1}^{t-1} \sum_{b'=0}^{B-1} \alpha_{b'} \sum_{u\in \mathcal{U}_{b'}} \alpha_u  g_u (\mathbf{w}_u^{\tau}) - \mathbf{w}_b^{\bar{t}_1} + \eta\sum_{u \in \mathcal{U}_b} \alpha_{u} \sum_{\tau=\bar{t}_1}^{t-1} g_{u} (\mathbf{w}_{u}^\tau)  \right\Vert^2 \right] \nonumber\\
     &\overset{(a)}{=} 2 \eta^2 \sum_{b=0}^{B-1} \alpha_b ~ \mathbb{E} \left[ \left\Vert \sum_{\tau=\bar{t}_1}^{t-1} \left[ \sum_{u \in \mathcal{U}_b} \alpha_{u} g_{u} (\mathbf{w}_{u}^\tau) - \sum_{b'=0}^{B-1} \alpha_{b'} \sum_{u\in \mathcal{U}_{b'}} \alpha_u  g_u (\mathbf{w}_u^{\tau}) \right] \right\Vert^2 \right] \nonumber\\
     &= 2 \eta^2 \sum_{b=0}^{B-1} \alpha_b ~ \mathbb{E} \Bigg[ \Bigg\Vert \sum_{\tau=\bar{t}_1}^{t-1} \Bigg[ \sum_{u \in \mathcal{U}_b} \alpha_{u} g_{u} (\mathbf{w}_{u}^\tau) - \sum_{u \in \mathcal{U}_b} \alpha_{u} \nabla f_{u} (\mathbf{w}_{u}^\tau) + \sum_{u \in \mathcal{U}_b} \alpha_{u} \nabla f_{u} (\mathbf{w}_{u}^\tau) - \sum_{b'=0}^{B-1} \alpha_{b'} \sum_{u\in \mathcal{U}_{b'}} \alpha_u \nabla f_u (\mathbf{w}_u^{\tau}) + \nonumber\\
     & \qquad \qquad \sum_{b'=0}^{B-1} \alpha_{b'} \sum_{u\in \mathcal{U}_{b'}} \alpha_u  \nabla f_u (\mathbf{w}_u^{\tau}) - \sum_{b'=0}^{B-1} \alpha_{b'} \sum_{u\in \mathcal{U}_{b'}} \alpha_u  g_u (\mathbf{w}_u^{\tau}) \Bigg] \Bigg\Vert^2 \Bigg] \nonumber\\
     &\overset{(b)}{\leq} 4 \eta^2 \sum_{b=0}^{B-1} \alpha_b ~ \mathbb{E} \Bigg[ \Bigg\Vert \sum_{\tau=\bar{t}_1}^{t-1} \Bigg[ \sum_{u \in \mathcal{U}_b} \alpha_{u} \left( g_{u} (\mathbf{w}_{u}^\tau) - \nabla f_{u} (\mathbf{w}_{u}^\tau) \right) + \sum_{b'=0}^{B-1} \alpha_{b'} \sum_{u\in \mathcal{U}_{b'}} \alpha_u \left( \nabla f_u (\mathbf{w}_u^{\tau}) - g_u (\mathbf{w}_u^{\tau}) \right) \Bigg] \Bigg\Vert^2 \Bigg] \nonumber\\
     &\qquad\qquad 4 \eta^2 \sum_{b=0}^{B-1} \alpha_b ~ \mathbb{E} \Bigg[ \Bigg\Vert \sum_{\tau=\bar{t}_1}^{t-1} \Bigg[ \sum_{u \in \mathcal{U}_b} \alpha_{u} \nabla f_{u} (\mathbf{w}_{u}^\tau) - \sum_{b'=0}^{B-1} \alpha_{b'} \sum_{u\in \mathcal{U}_{b'}} \alpha_u \nabla f_u (\mathbf{w}_u^{\tau}) \Bigg] \Bigg\Vert^2 \Bigg],
 \end{align}
 where $(a)$ stems from the fact that $\mathbf{w}_u^{\bar{t}_1} \gets \mathbf{w}_b^{\bar{t}_1}$ at $\bar{t}_1$ and the independence of the stochastic gradients.

 Now, we simplify the first term of (\ref{lemma2MainEqn}) as follows:
 \begin{align}
 \label{lemma2MainEqnFirstTerm}
     &4 \eta^2 \sum_{b=0}^{B-1} \alpha_b ~ \mathbb{E} \Bigg[ \Bigg\Vert \sum_{\tau=\bar{t}_1}^{t-1} \Bigg[ \sum_{u \in \mathcal{U}_b} \alpha_{u} \left( g_{u} (\mathbf{w}_{u}^\tau) - \nabla f_{u} (\mathbf{w}_{u}^\tau) \right) + \sum_{b'=0}^{B-1} \alpha_{b'} \sum_{u\in \mathcal{U}_{b'}} \alpha_u \left( \nabla f_u (\mathbf{w}_u^{\tau}) - g_u (\mathbf{w}_u^{\tau}) \right) \Bigg] \Bigg\Vert^2 \Bigg] \nonumber\\
     &=4 \eta^2 \sum_{b=0}^{B-1} \alpha_b ~ \mathbb{E} \Bigg[ \Bigg\Vert \sum_{\tau=\bar{t}_1}^{t-1} \Bigg[ \sum_{u \in \mathcal{U}_b} \alpha_{u} \left( g_{u} (\mathbf{w}_{u}^\tau) - \nabla f_{u} (\mathbf{w}_{u}^\tau) \right) - \sum_{b'=0}^{B-1} \alpha_{b'} \sum_{u\in \mathcal{U}_{b'}} \alpha_u \left( g_u (\mathbf{w}_u^{\tau}) - \nabla f_u (\mathbf{w}_u^{\tau}) \right) \Bigg] \Bigg\Vert^2 \Bigg] \nonumber\\
     &\overset{(a)}{=} 4 \eta^2 \sum_{b=0}^{B-1} \alpha_b ~ \mathbb{E} \Bigg[ \Bigg\Vert \sum_{\tau=\bar{t}_1}^{t-1} \Bigg[ \sum_{u \in \mathcal{U}_b} \alpha_{u} \left( g_{u} (\mathbf{w}_{u}^\tau) - \nabla f_{u} (\mathbf{w}_{u}^\tau) \right) \Bigg] \Bigg\Vert^2 \Bigg] - 4 \eta^2 ~ \mathbb{E} \Bigg[ \Bigg\Vert \sum_{\tau=\bar{t}_1}^{t-1} \Bigg[ \sum_{b'=0}^{B-1} \alpha_{b'} \sum_{u\in \mathcal{U}_{b'}} \alpha_u \left( g_u (\mathbf{w}_u^{\tau}) - \nabla f_u (\mathbf{w}_u^{\tau}) \right) \Bigg] \Bigg\Vert^2 \Bigg] \nonumber\\
     &\overset{(b)}{=} 4 \eta^2 \sum_{b=0}^{B-1} \alpha_b ~ \sum_{\tau=\bar{t}_1}^{t-1} \sum_{u \in \mathcal{U}_b} \alpha_{u}^2 \mathbb{E} \left[ \left\Vert  g_{u} (\mathbf{w}_{u}^\tau) - \nabla f_{u} (\mathbf{w}_{u}^\tau) \right\Vert^2 \right] - 4 \eta^2 ~\sum_{\tau=\bar{t}_1}^{t-1} \sum_{b=0}^{B-1} \alpha_{b}^2 \sum_{u\in \mathcal{U}_{b}} \alpha_u^2 \mathbb{E} \left[ \left\Vert g_u (\mathbf{w}_u^{\tau}) - \nabla f_u (\mathbf{w}_u^{\tau}) \right\Vert^2 \right] \nonumber\\
     &\overset{(c)}{\leq} 4 \eta^2 \sum_{b=0}^{B-1} \alpha_b ~ \kappa_1 \kappa_0 \sum_{u \in \mathcal{U}_b} \alpha_{u}^2 \sigma^2 - 4 \kappa_1 \kappa_0 \eta^2  \sum_{b=0}^{B-1} \alpha_{b}^2 \sum_{u\in \mathcal{U}_{b}} \alpha_u^2 \sigma^2 \nonumber\\
     &=4 \kappa_1 \kappa_0 \eta^2 \sigma^2 \sum_{b=0}^{B-1} \alpha_b \sum_{u \in \mathcal{U}_b} \alpha_{u}^2 - 4 \kappa_1 \kappa_0 \eta^2 \sigma^2 \sum_{b=0}^{B-1} \alpha_{b}^2 \sum_{u\in \mathcal{U}_{b}} \alpha_u^2,
 \end{align}
 where $(a)$ stems from the fact that $\sum_{i=0}^{I-1} \alpha_i \Vert \mathbf{a}_i - \bar{\mathbf{a}} \Vert^2 = \sum_{i=0}^{I-1} \alpha_i \Vert \mathbf{a}_i \Vert^2 - \Vert \bar{\mathbf{a}} \Vert^2$ if $\bar{\mathbf{a}} =\sum_{i=0}^{I-1} \alpha_i \mathbf{a}_i$, $0\leq \alpha_i \leq 1$ and $\sum_{i=0}^{I-1} \alpha_i=1$.
 In $(b)$, we use the independence of the stochastic gradients.
 In $(c)$, we use the fact that $[(t-1)-\bar{t}_1] \leq \kappa_1\kappa_0$ and Assumption \ref{AsumpUnbiased}.

 Similarly, we simplify the second term of (\ref{lemma2MainEqn}) as 
 \begin{align}
 \label{lemma2MainEqnSecondTerm}
     &4 \eta^2 \sum_{b=0}^{B-1} \alpha_b ~ \mathbb{E} \Bigg[ \Bigg\Vert \sum_{\tau=\bar{t}_1}^{t-1} \Bigg[ \sum_{u \in \mathcal{U}_b} \alpha_{u} \nabla f_{u} (\mathbf{w}_{u}^\tau) - \sum_{b'=0}^{B-1} \alpha_{b'} \sum_{u\in \mathcal{U}_{b'}} \alpha_u \nabla f_u (\mathbf{w}_u^{\tau}) \Bigg] \Bigg\Vert^2 \Bigg] \nonumber\\
     &\overset{(a)}{\leq} 4 \kappa_0\kappa_1 \eta^2 \sum_{b=0}^{B-1} \alpha_b ~ \sum_{\tau=\bar{t}_1}^{t-1} \mathbb{E} \left[ \left\Vert \sum_{u \in \mathcal{U}_b} \alpha_{u} \nabla f_{u} (\mathbf{w}_{u}^\tau) - \sum_{b'=0}^{B-1} \alpha_{b'} \sum_{u\in \mathcal{U}_{b'}} \alpha_u \nabla f_u (\mathbf{w}_u^{\tau}) \right \Vert^2 \right] \nonumber\\
     &\overset{(b)}{\leq} 4 \eta^2 \kappa_1^2\kappa_0^2 \sum_{b=0}^{B-1} \alpha_b ~ \mathbb{E} \Bigg[ \Bigg\Vert \sum_{u \in \mathcal{U}_b} \alpha_{u} \nabla f_{u} (\mathbf{w}_{u}^t) - \sum_{u \in \mathcal{U}_b} \alpha_{u} \nabla f_{u} (\bar{\mathbf{w}}_{b}^t) + \sum_{u \in \mathcal{U}_b} \alpha_{u} \nabla f_{u} (\bar{\mathbf{w}}_{b}^t) - \sum_{u \in \mathcal{U}_b} \alpha_{u} \nabla f_{u} (\bar{\mathbf{w}}^t) + \nonumber\\
     &\qquad\qquad \sum_{u \in \mathcal{U}_b} \alpha_{u} \nabla f_{u} (\bar{\mathbf{w}}^t) - \sum_{b'=0}^{B-1} \alpha_{b'} \sum_{u\in \mathcal{U}_{b'}} \alpha_u \nabla f_u (\bar{\mathbf{w}}^{t}) +  \sum_{b'=0}^{B-1} \alpha_{b'} \sum_{u\in \mathcal{U}_{b'}} \alpha_u \nabla f_u (\bar{\mathbf{w}}^{t}) - \sum_{b'=0}^{B-1} \alpha_{b'} \sum_{u\in \mathcal{U}_{b'}} \alpha_u \nabla f_u (\bar{\mathbf{w}}_b^{t}) + \nonumber\\
     &\qquad\qquad \sum_{b'=0}^{B-1} \alpha_{b'} \sum_{u\in \mathcal{U}_{b'}} \alpha_u \nabla f_u (\bar{\mathbf{w}}_b^{t}) - \sum_{b'=0}^{B-1} \alpha_{b'} \sum_{u\in \mathcal{U}_{b'}} \alpha_u \nabla f_u (\mathbf{w}_u^{t}) \Bigg \Vert^2 \Bigg] \nonumber\\
     &\overset{(c)}{\leq} 20 \eta^2 \kappa_1^2\kappa_0^2 \sum_{b=0}^{B-1} \alpha_b ~ \mathbb{E} \Bigg[ \Bigg\Vert \sum_{u \in \mathcal{U}_b} \alpha_{u} \nabla f_{u} (\mathbf{w}_{u}^t) - \sum_{u \in \mathcal{U}_b} \alpha_{u} \nabla f_{u} (\bar{\mathbf{w}}_{b}^t) \Bigg \Vert^2 \Bigg] + 20 \eta^2 \kappa_1^2\kappa_0^2 \sum_{b=0}^{B-1} \alpha_b ~ \mathbb{E} \Bigg[ \Bigg\Vert  \sum_{u \in \mathcal{U}_b} \alpha_{u} \nabla f_{u} (\bar{\mathbf{w}}_{b}^t) - \sum_{u \in \mathcal{U}_b} \alpha_{u} \nabla f_{u} (\bar{\mathbf{w}}^t) \Bigg \Vert^2 \Bigg] \nonumber\\
     &\qquad \qquad 20 \eta^2 \kappa_1^2\kappa_0^2 \sum_{b=0}^{B-1} \alpha_b ~ \mathbb{E} \Bigg[ \Bigg\Vert \sum_{u \in \mathcal{U}_b} \alpha_{u} \nabla f_{u} (\bar{\mathbf{w}}^t) - \sum_{b'=0}^{B-1} \alpha_{b'} \sum_{u\in \mathcal{U}_{b'}} \alpha_u \nabla f_u (\bar{\mathbf{w}}^{t}) \Bigg \Vert^2 \Bigg] + \nonumber\\
     &\qquad \qquad 20 \eta^2 \kappa_1^2\kappa_0^2 \sum_{b=0}^{B-1} \alpha_b ~ \mathbb{E} \Bigg[ \Bigg\Vert \sum_{b'=0}^{B-1} \alpha_{b'} \sum_{u\in \mathcal{U}_{b'}} \alpha_u \nabla f_u (\bar{\mathbf{w}}^{t}) - \sum_{b'=0}^{B-1} \alpha_{b'} \sum_{u\in \mathcal{U}_{b'}} \alpha_u \nabla f_u (\bar{\mathbf{w}}_b^{t}) \Bigg \Vert^2 \Bigg] + \nonumber\\
     &\qquad \qquad 20 \eta^2 \kappa_1^2\kappa_0^2 \sum_{b=0}^{B-1} \alpha_b ~ \mathbb{E} \Bigg[ \Bigg\Vert \sum_{b'=0}^{B-1} \alpha_{b'} \sum_{u\in \mathcal{U}_{b'}} \alpha_u \nabla f_u (\bar{\mathbf{w}}_b^{t}) - \sum_{b'=0}^{B-1} \alpha_{b'} \sum_{u\in \mathcal{U}_{b'}} \alpha_u \nabla f_u (\mathbf{w}_u^{t}) \Bigg \Vert^2 \Bigg] \nonumber\\
     &\overset{(d)}{\leq} 20 \eta^2 \kappa_1^2\kappa_0^2 \sum_{b=0}^{B-1} \alpha_b \sum_{u \in \mathcal{U}_b} \alpha_{u} \mathbb{E} \left[ \left\Vert  \nabla f_{u} (\mathbf{w}_{u}^t) - \nabla f_{u} (\bar{\mathbf{w}}_{b}^t) \right \Vert^2 \right] + 20 \eta^2 \kappa_1^2\kappa_0^2 \sum_{b=0}^{B-1} \alpha_b ~ \mathbb{E} \left[ \left\Vert  \nabla f_{b} (\bar{\mathbf{w}}_{b}^t) -  \nabla f_{b} (\bar{\mathbf{w}}^t) \right \Vert^2 \right] \nonumber\\
     &\qquad \qquad 20 \eta^2 \kappa_1^2\kappa_0^2 \sum_{b=0}^{B-1} \alpha_b ~ \mathbb{E} \left[ \left\Vert  \nabla f_b (\bar{\mathbf{w}}^t) - \nabla f (\bar{\mathbf{w}}^{t}) \right \Vert^2 \right] + 20 \eta^2 \kappa_1^2\kappa_0^2 \sum_{b=0}^{B-1} \alpha_b ~ \sum_{u\in \mathcal{U}_{b}} \alpha_u \mathbb{E} \left[ \left\Vert  \nabla f_u (\bar{\mathbf{w}}^{t}) - \nabla f_u (\bar{\mathbf{w}}_b^{t}) \right \Vert^2 \right] + \nonumber\\
     &\qquad \qquad 20 \eta^2 \kappa_1^2\kappa_0^2 \sum_{b=0}^{B-1} \alpha_b ~ \sum_{u\in \mathcal{U}_{b}} \alpha_u  \mathbb{E} \left[ \left\Vert \nabla f_u (\bar{\mathbf{w}}_b^{t}) - \nabla f_u (\mathbf{w}_u^{t}) \right \Vert^2 \right] \nonumber\\
     &\overset{(e)}{\leq} 20 \eta^2 \kappa_1^2\kappa_0^2 \sum_{b=0}^{B-1} \alpha_b \sum_{u \in \mathcal{U}_b} \alpha_{u} \cdot \beta^2 \mathbb{E} \left[ \left\Vert  \mathbf{w}_{u}^t - \bar{\mathbf{w}}_{b}^t \right \Vert^2 \right] + 20 \eta^2 \kappa_1^2\kappa_0^2 \sum_{b=0}^{B-1} \alpha_b \cdot \beta^2  \mathbb{E} \left[ \left\Vert \bar{\mathbf{w}}_{b}^t - \bar{\mathbf{w}}^t \right \Vert^2 \right] + 20 \eta^2 \kappa_1^2\kappa_0^2 \sum_{b=0}^{B-1} \alpha_b \cdot \epsilon_1^2 + \nonumber\\
     &\qquad \qquad 20 \eta^2 \kappa_1^2\kappa_0^2 \sum_{b=0}^{B-1} \alpha_b \sum_{u\in \mathcal{U}_{b}} \alpha_u \cdot \beta^2 \mathbb{E} \left[ \left\Vert \bar{\mathbf{w}}^{t} - \bar{\mathbf{w}}_b^{t} \right \Vert^2 \right] + 20 \eta^2 \kappa_1^2\kappa_0^2 \sum_{b=0}^{B-1} \alpha_b \sum_{u\in \mathcal{U}_{b}} \alpha_u \cdot \beta^2 \mathbb{E} \left[ \left\Vert \bar{\mathbf{w}}_b^{t} - \mathbf{w}_u^{t} \right \Vert^2 \right] \nonumber\\
     &= 20 \epsilon_1^2 \eta^2 \kappa_1^2 \kappa_0^2 + 40 \beta^2 \eta^2 \kappa_1^2\kappa_0^2 \sum_{b=0}^{B-1} \alpha_b \sum_{u\in \mathcal{U}_{b}} \alpha_u \mathbb{E} \left[ \left\Vert \bar{\mathbf{w}}_b^{t} - \mathbf{w}_u^{t} \right \Vert^2 \right] + 40 \beta^2 \eta^2 \kappa_1^2\kappa_0^2 \sum_{b=0}^{B-1} \alpha_b \mathbb{E} \left[ \left\Vert \bar{\mathbf{w}}^{t} - \bar{\mathbf{w}}_b^{t} \right \Vert^2 \right] \nonumber\\
     &\overset{(f)}{\leq} 20 \epsilon_1^2 \eta^2 \kappa_1^2 \kappa_0^2 + 40 \beta^2 \eta^2 \kappa_1^2\kappa_0^2 \times \left( 2 \eta^2 \kappa_0^2 \sigma^2 + 6 \epsilon_0^2 \eta^2 \kappa_0^2 - 2 \eta^2 \kappa_0^2 \sigma^2 \sum_{b=0}^{B-1} \alpha_b \sum_{u \in \mathcal{U}_b} \alpha_{u}^2\right) + 40 \beta^2 \eta^2 \kappa_1^2\kappa_0^2 \sum_{b=0}^{B-1} \alpha_b \mathbb{E} \left[ \left\Vert \bar{\mathbf{w}}^{t} - \bar{\mathbf{w}}_b^{t} \right \Vert^2 \right] \nonumber\\
     &=20 \epsilon_1^2 \eta^2 \kappa_1^2 \kappa_0^2 + 80 \beta^2 \kappa_1^2 \sigma^2 \eta^4 \kappa_0^4 + 240 \beta^2 \epsilon_0^2 \kappa_1^2 \eta^4 \kappa_0^4 - 80 \beta^2 \kappa_1^2 \sigma^2 \eta^4 \kappa_0^4 \sum_{b=0}^{B-1} \alpha_b \sum_{u \in \mathcal{U}_b} \alpha_{u}^2 + 40 \beta^2 \eta^2 \kappa_1^2\kappa_0^2 \sum_{b=0}^{B-1} \alpha_b \mathbb{E} \left[ \left\Vert \bar{\mathbf{w}}^{t} - \bar{\mathbf{w}}_b^{t} \right \Vert^2 \right]
 \end{align}
 where we use the fact that $[(t-1)-\bar{t}_1] \leq \kappa_1\kappa_0$ in $(a)$ and $(b)$.
 Besides, $(c)$ stems from $\Vert \sum_{i=1}^I \mathbf{a}_i \Vert^2 \leq I  \sum_{i=1}^I \Vert \mathbf{a}_i \Vert^2$.
 In $(d)$, we use Jensen inequality and the definitions of the loss functions.
 Furthermore, $(e)$ comes from our Assumptions \ref{AsumpSmoothNess} and \ref{AsumpBoundedDivergence}.
 Finally, $(f)$ is derived using Lemma \ref{lemma1App}.

 Now, plugging (\ref{lemma2MainEqnFirstTerm}) and (\ref{lemma2MainEqnSecondTerm}) into (\ref{lemma2MainEqn}), we get
 \begin{align}
     &2 \sum_{b=0}^{B-1} \alpha_b ~ \mathbb{E} \left[ \left\Vert \bar{\mathbf{w}}^t - \bar{\mathbf{w}}_b^t \right\Vert^2 \right] \nonumber\\
     &\leq 4 \kappa_1 \kappa_0 \eta^2 \sigma^2 \sum_{b=0}^{B-1} \alpha_b \sum_{u \in \mathcal{U}_b} \alpha_{u}^2 - 4 \kappa_1 \kappa_0 \eta^2 \sigma^2 \sum_{b=0}^{B-1} \alpha_{b}^2 \sum_{u\in \mathcal{U}_{b}} \alpha_u^2 + 20 \epsilon_1^2 \eta^2 \kappa_1^2 \kappa_0^2 + 80 \beta^2 \kappa_1^2 \sigma^2 \eta^4 \kappa_0^4 + 240 \beta^2 \epsilon_0^2 \kappa_1^2 \eta^4 \kappa_0^4 \nonumber\\
     &\qquad \qquad - 80 \beta^2 \kappa_1^2 \sigma^2 \eta^4 \kappa_0^4 \sum_{b=0}^{B-1} \alpha_b \sum_{u \in \mathcal{U}_b} \alpha_{u}^2 + 40 \beta^2 \eta^2 \kappa_1^2\kappa_0^2 \sum_{b=0}^{B-1} \alpha_b \mathbb{E} \left[ \left\Vert \bar{\mathbf{w}}^{t} - \bar{\mathbf{w}}_b^{t} \right \Vert^2 \right] \nonumber\\
     &= 20 \epsilon_1^2 \eta^2 \kappa_1^2 \kappa_0^2 + 80 \beta^2 \kappa_1^2 \sigma^2 \eta^4 \kappa_0^4 + 240 \beta^2 \epsilon_0^2 \kappa_1^2 \eta^4 \kappa_0^4 + 4 \kappa_1 \kappa_0 \eta^2 \sigma^2 \sum_{b=0}^{B-1} \alpha_b \sum_{u \in \mathcal{U}_b} \alpha_{u}^2 - 4 \kappa_1 \kappa_0 \eta^2 \sigma^2 \sum_{b=0}^{B-1} \alpha_{b}^2 \sum_{u\in \mathcal{U}_{b}} \alpha_u^2 \nonumber\\
     &\qquad\qquad - 80 \beta^2 \kappa_1^2 \sigma^2 \eta^4 \kappa_0^4 \sum_{b=0}^{B-1} \alpha_b \sum_{u \in \mathcal{U}_b} \alpha_{u}^2 + 40 \beta^2 \eta^2 \kappa_1^2\kappa_0^2 \sum_{b=0}^{B-1} \alpha_b \mathbb{E} \left[ \left\Vert \bar{\mathbf{w}}^{t} - \bar{\mathbf{w}}_b^{t} \right \Vert^2 \right] 
 \end{align}
 Rearranging the terms and dividing both sides by $2$, we get
 \begin{align}
     % &2(1 - 20 \beta^2 \eta^2 \kappa_1^2\kappa_0^2) \sum_{b=0}^{B-1} \alpha_b ~ \mathbb{E} \left[ \left\Vert \bar{\mathbf{w}}^t - \bar{\mathbf{w}}_b^t \right\Vert^2 \right] \nonumber\\
     % &\leq 20 \epsilon_1^2 \eta^2 \kappa_1^2 \kappa_0^2 + 80 \beta^2 \kappa_1^2 \sigma^2 \eta^4 \kappa_0^4 + 240 \beta^2 \epsilon_0^2 \kappa_1^2 \eta^4 \kappa_0^4 + 4 \kappa_1 \kappa_0 \eta^2 \sigma^2 \sum_{b=0}^{B-1} \alpha_b \sum_{u \in \mathcal{U}_b} \alpha_{u}^2 - 4 \kappa_1 \kappa_0 \eta^2 \sigma^2 \sum_{b=0}^{B-1} \alpha_{b}^2 \sum_{u\in \mathcal{U}_{b}} \alpha_u^2 \nonumber\\
     % &\qquad\qquad - 80 \beta^2 \kappa_1^2 \sigma^2 \eta^4 \kappa_0^4 \sum_{b=0}^{B-1} \alpha_b \sum_{u \in \mathcal{U}_b} \alpha_{u}^2 
     &(1 - 20 \beta^2 \eta^2 \kappa_1^2\kappa_0^2) \sum_{b=0}^{B-1} \alpha_b ~ \mathbb{E} \left[ \left\Vert \bar{\mathbf{w}}^t - \bar{\mathbf{w}}_b^t \right\Vert^2 \right] \nonumber\\
     &\leq 10 \epsilon_1^2 \eta^2 \kappa_1^2 \kappa_0^2 + 40 \beta^2 \kappa_1^2 \sigma^2 \eta^4 \kappa_0^4 + 120 \beta^2 \epsilon_0^2 \kappa_1^2 \eta^4 \kappa_0^4 + 2 \kappa_1 \kappa_0 \eta^2 \sigma^2 \sum_{b=0}^{B-1} \alpha_b \sum_{u \in \mathcal{U}_b} \alpha_{u}^2 - 2 \kappa_1 \kappa_0 \eta^2 \sigma^2 \sum_{b=0}^{B-1} \alpha_{b}^2 \sum_{u\in \mathcal{U}_{b}} \alpha_u^2 \nonumber\\
     &\qquad\qquad - 40 \beta^2 \kappa_1^2 \sigma^2 \eta^4 \kappa_0^4 \sum_{b=0}^{B-1} \alpha_b \sum_{u \in \mathcal{U}_b} \alpha_{u}^2
 \end{align}
 If $\eta \leq \frac{1}{2\sqrt{5} \beta \kappa_1\kappa_0}$, we have $0 < (1 - 20\beta^2\eta^2\kappa_1^2\kappa_0^2) <1$. Thus, we write
 \begin{align}
     \sum_{b=0}^{B-1} \alpha_b ~ \mathbb{E} \left[ \left\Vert \bar{\mathbf{w}}^t - \bar{\mathbf{w}}_b^t \right\Vert^2 \right] 
     \leq& 10 \epsilon_1^2 \eta^2 \kappa_1^2 \kappa_0^2 + 40 \beta^2 \kappa_1^2 \sigma^2 \eta^4 \kappa_0^4 + 120 \beta^2 \epsilon_0^2 \kappa_1^2 \eta^4 \kappa_0^4 + 2 \kappa_1 \kappa_0 \eta^2 \sigma^2 \sum_{b=0}^{B-1} \alpha_b \sum_{u \in \mathcal{U}_b} \alpha_{u}^2 - \nonumber\\
     &2 \kappa_1 \kappa_0 \eta^2 \sigma^2 \sum_{b=0}^{B-1} \alpha_{b}^2 \sum_{u\in \mathcal{U}_{b}} \alpha_u^2 - 40 \beta^2 \kappa_1^2 \sigma^2 \eta^4 \kappa_0^4 \sum_{b=0}^{B-1} \alpha_b \sum_{u \in \mathcal{U}_b} \alpha_{u}^2
 \end{align}

 \end{proof}

\end{document}